\newcommand{\argmin}{\mathop{\mathrm{argmin}}}
\newcommand{\diag}[1]{\mathrm{diag}\left(#1\right)}
\newcommand{\Eqref}[1]{Eq. \eqref{#1}}
\newcommand{\boldy}{{\boldsymbol{y}}}
\newcommand{\boldxi}{{\boldsymbol{\xi}}}
\newcommand{\calF}{{\mathcal F}}
\newcommand{\calG}{{\mathcal G}}
\newcommand{\calT}{{\mathcal T}}
\newcommand{\calX}{{\mathcal X}}
\newcommand{\calY}{{\mathcal Y}}
\newcommand{\calZ}{{\mathcal Z}}
\newcommand{\fhat}{\widehat{f}}
\newcommand{\Jhatell}[1]{J^{\sharp}_{#1}}
\newcommand{\Jhat}{J^{\sharp}}
\newcommand{\Px}{P_{\calX}}
\newcommand{\Py}{P_{\calY}}
\newcommand{\LPi}{L_2} 
\newcommand{\Real}{\mathbb{R}}
\newcommand{\Natural}{\mathbb{N}}
\newcommand{\EE}{\mathrm{E}}
\newcommand{\dd}{\mathrm{d}}
\newcommand{\fstar}{f^{\ast}}
\newcommand{\ftrue}{f^{\mathrm{o}}}
\newcommand{\htrue}{h^{\mathrm{o}}}
\newcommand{\btrue}{b^{\mathrm{o}}}
\newcommand{\Ftrue}{F^{\mathrm{o}}}
\def\I<#1>{\left\langle #1 \right\rangle}
\def\i<#1>{\left\langle #1 \right\rangle}
\newcommand{\ntr}{n} 
\newcommand{\nval}{n_{\mathrm{val}}}
\newcommand{\Dtr}{D_{\mathrm{tr}}}
\newcommand{\Covhatell}[1]{\widehat{\Sigma}^{(#1)}}
\newcommand{\Sigmahat}{\widehat{\Sigma}}
\newcommand{\Fhat}{\hat{F}}
\newcommand{\Whatell}[1]{\hat{W}^{(#1)}}
\newcommand{\bhatell}[1]{\hat{b}^{(#1)}}
\newcommand{\Nhatell}[1]{\hat{N}_{#1}}
\newcommand{\Wtilell}[1]{\widetilde{W}^{(#1)}}
\newcommand{\btilell}[1]{\tilde{b}^{(#1)}}
\newcommand{\Wsharpell}[1]{W^{\sharp(#1)}}
\newcommand{\bsharpell}[1]{b^{\sharp(#1)}}
\newcommand{\Whatdell}[1]{\ddot{W}^{(#1)}}
\newcommand{\bhatdell}[1]{\ddot{b}^{(#1)}}
\newcommand{\Nhatdell}[1]{\hat{N}'_{#1}}
\newcommand{\Nelltheta}{N_\ell^\theta}
\newcommand{\mhatell}{m^{\sharp}_{\ell}}
\newcommand{\mhat}[1]{m^{\sharp}_{#1}}
\newcommand{\fsharp}{f^{\sharp}}
\newcommand{\zetahat}{\hat{\zeta}}
\newcommand{\gammabar}{{\gamma^*}}
\newcommand{\Rnt}{R_{n,t}}
\newcommand{\EEhat}{\widehat{\EE}}
\newcommand{\op}{\mathrm{op}}
\newcommand{\calFhatm}[1]{\hat{\mathcal{F}}_{\boldsymbol{#1}}}
\newcommand{\calGdeltam}[2]{\mathcal{G}_{#1,\boldsymbol{#2}}}
\newcommand{\calGm}[1]{\mathcal{G}_{\boldsymbol{#1}}}
\newcommand{\fhatbeta}{\beta_n}
\newcommand{\Phinr}{\Phi_{n,r}}
\newcommand{\Psinr}{\Psi_{n,r}}
\newcommand{\deltanone}{\delta_{1}}
\newcommand{\deltantwo}{\delta_{2}}
\newcommand{\cZtheta}{\zeta_{\ell,\theta}}
\newcommand{\welle}[1]{{\tilde{\tau}^{(#1)}}}
\newcommand{\qWconst}{c_{\mathrm{scale}}}
\newcommand{\ldkakko}{[\![}
\newcommand{\rdkakko}{]\!]}
\newcommand{\gain}{a}
\newcommand{\Ours}{Spec}
\newcommand{\alphamax}{\alpha_{\mathrm{max}}}
\newtheorem{Theorem}{Theorem}
\newtheorem{Lemma}{Lemma}
\newtheorem{Assumption}{Assumption}
\newtheorem{Proposition}{Proposition}
\newcommand{\Ltwo}{L_2}
\newcommand{\dx}{d_x}
\newcommand{\Tr}{\mathrm{Tr}}
\newcommand{\F}{\mathrm{F}}
\newcommand{\mell}{m_\ell}
\newcommand{\melle}[1]{m_{#1}}
\newcommand{\czero}{c_0}
\newcommand{\cone}{c_1}
\newcommand{\conedelta}{\hat{c}} 
\newcommand{\Well}[1]{W^{(#1)}}
\newcommand{\bell}[1]{b^{(#1)}}
\newcommand{\cdelta}{c_\delta}
\newcommand{\Rb}{R_b}
\newcommand{\Id}{\mathrm{I}}
\newcommand{\logone}{\log_+}
\newcommand{\Rhatinf}{\hat{R}_{\infty}}
\newcommand{\Rbar}{\bar{R}}
\newcommand{\Rbarb}{\bar{R}_b}
\newcommand{\figcaption}[1]{\def\@captype{figure}\caption{#1}}
\newcommand{\tblcaption}[1]{\def\@captype{table}\caption{#1}}
\title{Spectral Pruning: Compressing Deep Neural Networks \\ via Spectral Analysis
and its Generalization Error\footnote{Copyright \copyright~2020 International Joint Conferences on Artificial Intelligence
(IJCAI). All rights reserved.}}
\author{
Taiji Suzuki$^{1,2,*}$ 
\and
Hiroshi Abe$^{3,\dagger}$ 
\and
Tomoya Murata$^4$
\and
Shingo Horiuchi$^5$
\and
Kotaro Ito$^4$  
\and
Tokuma Wachi$^4$
\and
So Hirai$^5$ 
\and
Masatoshi Yukishima$^4$ 
\And
Tomoaki Nishimura$^{5,\ddagger}$ 
\affiliations
$^1$The University of Tokyo, Japan \and \\
$^2$Center for Advanced Intelligence Project, RIKEN, Japan \and \\ 
$^3$iPride Co., Ltd., Japan \and \\ 
$^4$NTT DATA Mathematical  Systems Inc., Japan \and \\  
$^5$NTT Data Corporation, Japan, \\ 
\emails 
$^{*}$taiji@mist.i.u-tokyo.ac.jp, 
$^\dagger$abe@ipride.co.jp,~$^\ddagger$tomoaki.nishimura.jp@gmail.com
}
\begin{document}

\maketitle

\begin{abstract}
Compression techniques for deep neural network models are becoming very important
for the efficient execution of high-performance deep learning systems on edge-computing devices.
The concept of model compression is also important for analyzing the generalization error of deep learning, known as the compression-based error bound.
However, there is still huge gap between a practically effective compression method and its rigorous background of statistical learning theory. To resolve this issue, we develop a new theoretical framework for model compression
and propose a new pruning method called {\it spectral pruning} based on this framework.
We define the ``degrees of freedom'' to quantify the intrinsic dimensionality of a model
by using the eigenvalue distribution of the covariance matrix across the internal nodes
and show that the compression ability is essentially controlled by this quantity.
Moreover, we present a sharp generalization error bound of the compressed model
and characterize the bias--variance tradeoff induced by the compression procedure.
We apply our method to several datasets to justify our theoretical analyses and show the superiority of the the proposed method.
\end{abstract}

\section{Introduction}
Currently, deep learning is the most promising approach adopted by various machine learning applications such as
computer vision, natural language processing, and audio processing.
Along with the rapid development of the deep learning techniques,
its network structure is becoming considerably complicated.
In addition to the model structure, the model size is also becoming larger,
which prevents the implementation of deep neural network models in edge-computing devices
for applications such as smartphone services, autonomous vehicle driving, and drone control.
To overcome this problem,
model compression techniques such as pruning, factorization \cite{denil2013predicting,NIPS2014_5544}, and quantization \cite{
han2015deep}
have been extensively studied in the literature.

Among these techniques, pruning is a typical approach 
that discards redundant nodes, 
e.g., by explicit regularization such as $\ell_1$ and $\ell_2$ penalization during training
\cite{lebedev2016fast,
NIPS2016_6504,he2017channel}.
It has been implemented as ThiNet \cite{iccv2017ThiNet}, Net-Trim \cite{NIPS2017_6910}, NISP \cite{yu2018nisp},
and so on \cite{denil2013predicting}. 
A similar effect can be realized by implicit randomized regularization such as DropConnect \cite{pmlr-v28-wan13}, which randomly removes
connections during the training phase.
However, only few of these techniques (e.g., Net-Trim \cite{NIPS2017_6910}) are supported by statistical learning theory.
In particular, it unclear which type of quantity controls the compression ability.
On the theoretical side,
compression-based generalization analysis is a promising approach for
measuring the redundancy of a network 
\cite{pmlr-v80-arora18b,zhou2018nonvacuous}. 
However, despite their theoretical novelty, the connection of these generalization error analyses to practically useful compression methods is not obvious.

In this paper, we develop a new compression based generalization error bound and 
propose a new simple pruning method that is compatible with the generalization error analysis.
Our method aims to minimize the information loss induced by compression; in particular, it minimizes the redundancy among nodes instead of merely looking at the amount of information of each individual node.
It can be executed by simply observing the covariance matrix in the internal layers and is easy to implement.
The proposed method is supported by a comprehensive theoretical analysis.
Notably, the approximation error induced by compression is characterized by the notion of the statistical {\it degrees of freedom} \cite{mallows1973some,FCM:Caponetto+Vito:2007}. 
It represents the intrinsic dimensionality of a model and is determined by the eigenvalues of the covariance matrix between each node in each layer.
Usually, we observe that the eigenvalue rapidly decreases 
(Fig. \ref{fig:EigenvaluesMNISTandCIFAR}) for several reasons such as explicit regularization (Dropout \cite{NIPS2013_4882},  
weight decay \cite{krogh1992simple}), and
implicit regularization \cite{pmlr-v48-hardt16,gunasekar2018implicit}, which 
means that the amount of important information processed in each layer is not large.
In particular, the rapid decay in eigenvalues leads to a low number of degrees of freedom.
Then, we can effectively compress a trained network into a smaller one that has fewer parameters than the original.
Behind the theory, there is essentially a connection to the {\it random feature technique} for kernel methods \cite{bach2017equivalence}.
Compression error analysis is directly connected to generalization error analysis.
The derived bound is actually much tighter than the naive VC-theory bound on the uncompressed network
\cite{bartlett2017nearly} and even tighter than recent compression-based bounds \cite{pmlr-v80-arora18b}.
Further, there is a tradeoff between the bias and the variance,
where the bias is induced by the network compression and
the variance is induced by the variation in the training data.
In addition, we show the superiority of our method and experimentally verify our theory with extensive numerical experiments.
Our contributions are summarized as follows:

\begin{itemize}
\item We give a theoretical compression bound which is compatible with a practically useful pruning method, 
and 
propose a new simple pruning method called {\it spectral pruning} for compressing deep neural networks.
\item We characterize the model compression ability by utilizing the notion of the degrees of freedom, which
represents the intrinsic dimensionality of the model.
We also give a generalization error bound when a trained network is compressed by our method
and show that the bias--variance tradeoff induced by model compression appears. The obtained bound is fairly tight compared with existing compression-based bounds and much tighter than the naive VC-dimension bound.
\end{itemize}



\section{Model Compression Problem and its Algorithm}

Suppose that the training data $\Dtr = \{(x_i,y_i)\}_{i=1}^{\ntr}$ are observed, where
$x_i \in \Real^{\dx}$ is an input and $y_i$ is an output that could be a real number ($y_i \in \Real$), a binary label ($y_i \in \{\pm 1\}$), and so on.
The training data are independently and identically
distributed.
To train the appropriate relationship between $x$ and $y$,
we construct a deep neural network model as
$$
f(x) =
(\Well{L} \eta( \cdot) + \bell{L}) \circ \dots
\circ (\Well{1} x + \bell{1}),
$$
where $\Well{\ell} \in \Real^{\melle{\ell + 1} \times \melle{\ell}}$, $\bell{\ell} \in \Real^{\melle{\ell + 1}}$ ($\ell = 1,\dots,L$), and $\eta: \Real \to \Real$
is an activation function (here, the activation function is applied in an element-wise manner; for a vector $x \in \Real^d$, $\eta(x)
=(\eta(x_1),\dots,\eta(x_d))^\top$).
Here, $\mell$ is the width of the $\ell$-th layer such that $m_{L+1} = 1$ (output) and $m_1 = \dx$ (input).
Let $\fhat$ be a trained network obtained from the training data $\Dtr = \{(x_i,y_i)\}_{i=1}^{\ntr}$ where 
its parameters are denoted by $(\Whatell{\ell},\bhatell{\ell})_{\ell=1}^L$,
i.e., $\fhat(x) = (\Whatell{L}\eta(\cdot) + \bhatell{L}) \circ \dots \circ (\Whatell{1} x + \bhatell{1})$.
The input to the $\ell$-th layer (after activation) is denoted by 
$\phi^{(\ell)}(x) =\eta \circ (\Whatell{\ell-1}\eta(\cdot) + \bhatell{\ell-1}) \circ \dots \circ (\Whatell{1} x + \bhatell{1}).$
We do not specify how to train the network $\fhat$, and
the following argument can be applied to any learning method
such as the empirical risk minimizer, the Bayes estimator, or another estimator.
We want to compress the trained network $\fhat$
to another smaller one $\fsharp$ having widths $(\mhatell)_{\ell=1}^L$
with keeping the test accuracy as high as possible.



To compress the trained network $\fhat$ to a smaller one $\fsharp$,
we propose a simple strategy called {\it spectral pruning}.
The main idea of the method is to find the {\it most informative subset} of the nodes.
The amount of information of the subset is measured by how well the selected nodes can explain the other nodes in the layer and recover the output to the next layer.
For example, if some nodes are heavily correlated with each other,
then only one of them will be selected by our method.
The information redundancy can be computed by
a covariance matrix between nodes and a simple regression problem.
We do not need to solve a specific nonlinear optimization problem
unlike the methods in \cite{lebedev2016fast,
NIPS2016_6504,NIPS2017_6910}.



\subsection{Algorithm Description}
Our method basically simultaneously minimizes the {\it input information loss} and {\it output information loss}, which will be defined as follows.

\noindent {\bf (i) Input information loss.} 
First, we explain the input information loss.
Denote $\phi(x) = \phi^{(\ell)}(x)$ for simplicity,  
and let $\phi_J(x) = (\phi_j(x))_{j \in J} \in \Real^{\mhatell}$ be a subvector of $\phi(x)$ corresponding to an index set $J
\in [\mell]^{\mhatell}$,
where $[m] := \{1,\dots,m\}$ (here, duplication of the index is allowed).
The basic strategy is to solve the following optimization problem so that we can recover $\phi(x)$ from $\phi_J(x)$ as accurately as possible:
\begin{align}
& \hat{A}_J  := (\hat{A}^{(\ell)}_J=) \argmin_{A \in \Real^{\mell \times |J|}} \EEhat[\|\phi - A \phi_J\|^2] + \|A\|_\tau^2, 
\label{eq:AJinputaware}
\end{align}
where $\EEhat[\cdot]$ is the expectation with respect to the empirical distribution
($\EEhat[f] = \frac{1}{n}\sum_{i=1}^{\ntr} f(x_i)$)
and $\|A\|_\tau^2 = \Tr[A \Id_\tau A^\top]$
for a regularization parameter  $\tau \in \Real_+^{|J|}$
and $\Id_\tau := \diag{\tau}$ (how to set the regularization parameter will be given in Theorem \ref{thm:WhatellWAphiError}).
The optimal solution $\hat{A}_J$ can be explicitly expressed
by utilizing the
{\it (noncentered) covariance matrix} in the $\ell$-th layer of the trained network $\fhat$,
which is defined as
$
\Sigmahat := \Covhatell{\ell}
= \frac{1}{n} \sum_{i=1}^n \phi(x_i) \phi(x_i)^\top, 
$
defined on the empirical distribution (here, we omit the layer index $\ell$ for notational simplicity).
Let $\Sigmahat_{I,I'} \in \Real^{K \times H}$ for $K,H \in \Natural$ be the submatrix of $\Sigmahat$ for the index sets $I \in [\mell]^{K}$ and
$I' \in [\mell]^{H}$
such that $\Sigmahat_{I,I'} = (\Sigmahat_{i,j})_{i\in I,j\in I'}$.
Let $F = \{1,\dots,\mell\}$ be the full index set.
Then, we can easily see that
$$
\hat{A}_J = \Sigmahat_{F,J}(\Sigmahat_{J,J} + \Id_\tau)^{-1}.
$$
Hence, the full vector $\phi(x)$ can be decoded from $\phi_J(x)$ as
$
\phi(x) \approx \hat{A}_J \phi_J(x) =  \Sigmahat_{F,J}(\Sigmahat_{J,J} + \Id_\tau)^{-1} \phi_J(x).
$
To measure the approximation error, we define
$L^{(\mathrm{A})}_{\tau}(J)  = 
\min_{A \in \Real^{\mell}\times |J|} \EEhat[\| \phi -A \phi_J\|^2] +   \| A \|_\tau^2.$
By substituting the explicit formula $\hat{A}_J$ into the objective, this is reformulated as
\begin{align*}
L^{(\mathrm{A})}_{\tau}(J)  = &
\Tr[
\Sigmahat_{F,F} - \Sigmahat_{F,J}(\Sigmahat_{J,J} +  \Id_\tau)^{-1}
\Sigmahat_{J,F} ].
\end{align*}

\noindent {\bf (ii) Output information loss.} 
Next, we explain the output information loss.
Suppose that we aim to directly approximate the outputs $Z^{(\ell)} \phi$ for a weight matrix $Z^{(\ell)} \in \Real^{m \times \mell}$ with an output size $m \in \Natural$.
A typical situation is 
that $Z^{(\ell)} = \Whatell{\ell}$ 
so that we approximate the output $\Whatell{\ell} \phi$ (the concrete setting of $Z^{(\ell)}$ will be specified in Theorem \ref{thm:WhatellWAphiError}).
Then, we consider the objective
\begin{align*}
L^{(\mathrm{B})}_{\tau} (J)
&   :=  \sum_{j=1}^m  \min\limits_{\alpha \in \Real^{\mell}} 
\left\{\EEhat[( Z_{j,:}^{(\ell)} \phi -\alpha^\top \phi_J)^2] \!+\! \| \alpha^\top \|_\tau^2 \right\} \\
&
 =
\Tr\{ Z^{(\ell)} [
\Sigmahat_{F,F} - \Sigmahat_{F,J}(\Sigmahat_{J,J} + \Id_\tau)^{-1}
\Sigmahat_{J,F} ] Z^{(\ell)\top}
\},
\end{align*}
where
$Z^{(\ell)}_{j,:}$ means the $j$-th raw of the matrix
$Z^{(\ell)}$.
It can be easily checked that the optimal solution $\hat{\alpha}_J$ of the minimum in the definition of $L^{(\mathrm{B})}_{\tau}$ 
is given as $\hat{\alpha}_J =  \hat{A}_J^\top Z_{j,:}^{(\ell) \top}$ for each $j=1,\dots,m$. 

\noindent {\bf (iii) Combination of the input and output information losses.} 
Finally, we combine the input and output information losses and aim to minimize this combination.
To do so, we propose to the use of the convex combination of both criteria for a parameter $0 \leq \theta \leq 1$
and
optimize it with respect to $J$ under a cardinality constraint $|J| = \mhatell$
for a prespecified width $\mhatell$ of the compressed network:
\begin{align}
& \min_J ~~
L^{(\theta)}_{\tau}(J) ~= \theta L^{(\mathrm{A})}_\tau(J) + (1- \theta)L^{(\mathrm{B})}_{\tau}(J) 
\notag \\ &
~~~\mathrm{s.t.}~~~
J \in [\mell]^{\mhat{\ell}}. 
\label{eq:minLjlambda}
\end{align}
We call this method {\bf spectral pruning}.
There are the hyperparameter $\theta$ and regularization parameter $\tau$.
However, we see that it is robust against the choice of hyperparameter in experiments (Sec. \ref{sec:NumExp}).
Let $\Jhatell{\ell}$ be the optimal $J$ that minimizes the objective. 
This optimization problem is NP-hard, but an approximate solution is obtained by
the greedy algorithm since it is reduced to maximization of a monotonic submodular function
\cite{krause2014submodular}. That is, we start from $J = \emptyset$,
sequentially choose an element $j^* \in [\mell]$ that maximally reduces the objective $L^{(\theta)}_{\tau}$,
and add this element $j^*$ to $J$ ($J \leftarrow J \cup \{j^*\}$) until $|J| = \mhatell$ is satisfied.

After we chose an index $\Jhatell{\ell}$ ($\ell = 2,\dots,L$) for each layer, we construct the compressed network $\fsharp$ as
$
\fsharp(x) =
(\Wsharpell{L} \eta( \cdot) + \bsharpell{L}) \circ \dots
\circ (\Wsharpell{1} x + \bsharpell{1})
$,
where $\Wsharpell{\ell} =  \Well{\ell}_{\Jhatell{\ell+1},F} \hat{A}_{\Jhatell{\ell}}^{(\ell)}$ and $\bsharpell{\ell} = \bell{\ell}_{\Jhatell{\ell+1}}$.

An application to a CNN is given in Appendix \ref{sec:convolution_extension}.
The method can be executed in a layer-wise manner, thus it can be applied to networks with complicated structures such as ResNet.

\section{Compression accuracy Analysis and Generalization Error Bound}

In this section, we give a theoretical guarantee of our method.
First, we give the approximation error induced by our pruning procedure in Theorem \ref{thm:WhatellWAphiError}.
Next, we evaluate the generalization error of the compressed network in Theorem \ref{thm:GenErrorCompNet}.
More specifically, we introduce a quantity called the {\it degrees of freedom} \cite{mallows1973some,FCM:Caponetto+Vito:2007,AISTATS:Suzuki:2018,suzuki2020compression} 
that represents the intrinsic dimensionality of the model
and determines the approximation accuracy.

For the theoretical analysis, we define a neural network model with norm constraints on the parameters
$\Well{\ell}$ and $\bell{\ell}$ ($\ell = 1,\dots,L$).
Let $R >0$ and $\Rb>0$ be the upper bounds of the
parameters, and
define the norm-constrained model as
\begin{align*}
 \calF  & :=
\textstyle \{   (\Well{L} \eta( \cdot) + \bell{L}) \circ \dots
\circ (\Well{1} x + \bell{1})  
\mid  \\ &
\textstyle
\max_{j} \|\Well{\ell}_{j,:}\| \leq \frac{ R}{\sqrt{\melle{\ell+1}}},~
\|\bell{\ell}\|_\infty \leq \frac{\Rb}{\sqrt{\melle{\ell + 1}}}~
\},
\end{align*}
where
$\Well{\ell}_{j,:}$ means the $j$-th raw of the matrix
$\Well{\ell}$,
$\|\cdot\|$ is the Euclidean norm,
and $\|\cdot\|_\infty$ is the $\ell_\infty$-norm\footnote{We are implicitly supposing $R,\Rb \simeq 1$ so that $\|\Well{\ell}\|_{\F},
\|\bell{\ell}\|=O(1)$.}.
We make the following assumption for the activation function, which is satisfied
by ReLU 
and leaky ReLU 
\cite{Maas13rectifiernonlinearities}.
\begin{Assumption}
\label{ass:EtaCondition}
We assume that the activation function $\eta$ satisfies
(1) scale invariance: $\eta(a x) = a\eta(x)$ for all $a >0$ and $x\in \Real^d$ and
(2) 1-Lipschitz continuity: $|\eta(x) - \eta(x')| \leq \|x - x'\|$ for all $x,x'\in \Real^d$,
where $d$ is arbitrary.
\end{Assumption}


\subsection{Approximation Error Analysis}

Here, we evaluate the approximation error derived by our pruning procedure.
Let $(\mhatell)_{\ell = 1}^L$ denote the width of each layer of the compressed network $\fsharp$.
We characterize the approximation error between $\fsharp$ and $\fhat$
on the basis of the degrees of freedom with respect to the {\it empirical $L_2$-norm}
$\|g\|_n^2 := \frac{1}{\ntr}\sum_{i=1}^{\ntr} \|g(x_i)\|^2$, which is defined for a vector-valued function $g$.
Recall that the empirical covariance matrix in the $\ell$-th layer is denoted by $\Covhatell{\ell}$.
We define the degrees of freedom as
$$
\textstyle
\Nhatell{\ell}(\lambda)
:= \mathrm{Tr}[ \Covhatell{\ell}( \Covhatell{\ell} + \lambda \Id)^{-1}]
=
\sum\nolimits_{j=1}^{\mell}
\hat{\mu}_j^{(\ell)}/(\hat{\mu}_j^{(\ell)} + \lambda),
$$
where $(\hat{\mu}_j^{(\ell)})_{j=1}^{\mell}$ are the eigenvalues of $\Covhatell{\ell}$ sorted in decreasing order.
Roughly speaking, this quantity quantifies the number of eigenvalues above $\lambda$, and thus it is monotonically decreasing w.r.t. $\lambda$.
The degrees of freedom play an essential role in investigating the predictive accuracy of ridge regression
\cite{mallows1973some,FCM:Caponetto+Vito:2007,bach2017equivalence}. 
To characterize the output information loss, 
we also define the {\it output aware degrees of freedom} with respect to a matrix $Z^{(\ell)}$ as 
$$
\Nhatdell{\ell}(\lambda;Z^{(\ell)}) := \Tr[Z^{(\ell)} \Covhatell{\ell}( \Covhatell{\ell} + \lambda \Id)^{-1}Z^{(\ell)\top}].
$$
This quantity measures 
the intrinsic dimensionality of the output from the $\ell$-th layer for a weight matrix $Z^{(\ell)}$.
If the covariance $\Covhatell{\ell}$ and 
the matrix $Z^{(\ell)}$ are near low rank,
$\Nhatdell{\ell}(\lambda;Z^{(\ell)})$ becomes much smaller than $\Nhatell{\ell}(\lambda)$. 
Finally, we define $\Nelltheta(\lambda) := \theta \hat{N}_\ell(\lambda) + (1-\theta) \Nhatdell{\ell}(\lambda;Z^{(\ell)})$.

To evaluate the approximation error induced by compression, we define $\lambda_\ell > 0$ as 
\begin{align}
\textstyle
\lambda_\ell = \inf\{ \lambda \geq 0 \mid \mhatell \geq 5 \Nhatell{\ell}(\lambda) \log(80 \Nhatell{\ell}(\lambda)) \}.
\label{eq:lambdaelldef}
\end{align}
Conversely, we may determine $\mhatell$ from $\lambda_\ell$ to obtain the theorems we will mention below.
Along with the degrees of freedom, we define the {\it leverage score} $\welle{\ell} \in \Real^{\mell}$
as
$
\welle{\ell}_j :=
 \frac{1}{\Nhatell{\ell}(\lambda_\ell)} [\Covhatell{\ell} (\Covhatell{\ell} + \lambda_\ell \Id)^{-1}]_{j,j}~(j\in [\mell]).
$
Note that $\sum_{j=1}^{\mell} \welle{\ell}_j = 1$ originates from the definition of the degrees of freedom.
The leverage score can be seen as the amount of contribution of node $j \in [\mell]$ to the degrees of freedom.
For simplicity, we assume that $\welle{\ell}_j > 0$ for all $\ell,j$ (otherwise, we just need to neglect such a node with $\welle{\ell}_j=0$).

For the approximation error bound, we consider two situations:
(i) (Backward procedure) spectral pruning is applied from $\ell=L$ to $\ell = 2$ in order,
and for pruning the $\ell$-th layer, we may utilize the selected index $\Jhatell{\ell + 1}$ in the $\ell+1$-th layer
and (ii) (Simultaneous procedure) spectral pruning is simultaneously applied for all $\ell = 2,\dots,L$.
We provide a statement for only the backward  procedure. The simultaneous procedure also achieves a similar bound with some modifications.
The complete statement will be given as Theorem \ref{thm:WhatellWAphiError:supple} in Appendix \ref{sec:CompressionErrorBoundProof}.  

As for 
$Z^{(\ell)}$ for the output information loss, we set
$Z^{(\ell)}_{k,:}= {\scriptstyle \sqrt{\melle{\ell} q_{j_k}^{(\ell)}} (\max_{j'} \|\Whatell{\ell}_{j',:}\|)^{-1}} \Whatell{\ell}_{j_k,:}~(k=1,\dots, \mhat{\ell+1})$ 
where we let $\Jhatell{\ell +1}=\{j_1,\dots,j_{\mhat{\ell+1}}\}$,
and  $q_j^{(\ell)} := \frac{( \welle{\ell+1}_j)^{-1}}{\sum_{j' \in \Jhatell{\ell + 1}} (\welle{\ell+1}_{j'})^{-1}}~(j \in \Jhatell{\ell + 1})$ and $q_j^{(\ell)} = 0~(\text{otherwise})$.
Finally, we set the regularization parameter $\tau$ 
as $\tau \leftarrow 
\mhatell \lambda_\ell \welle{\ell}$. 

\begin{Theorem}[Compression rate via the degrees of freedom]
\label{thm:WhatellWAphiError}

If we solve the optimization problem \eqref{eq:minLjlambda}
with the additional constraint $\sum_{j \in J} (\welle{\ell}_j)^{-1} \leq
\frac{5}{3}\mell \mhatell 
$ for the index set $J$, 
then the optimization problem is feasible, and
the overall approximation error of $\fsharp$
is bounded by
\begin{align}
\|\fhat - \fsharp\|_n
\leq
\sum\nolimits_{\ell = 2}^L
\left( \Rbar^{L-\ell + 1} \sqrt{ {\textstyle \prod_{\ell'=\ell}^L}\zeta_{\ell',\theta} } \right) \sqrt{\lambda_\ell}
\label{eq:L2comressionbound}
\end{align}
for $\Rbar = \sqrt{\conedelta}  R$, where $\conedelta$ is a universal constant, and 
$
\cZtheta  :=
\Nelltheta(\lambda_\ell)  \left( \theta \frac{ \max_{j \in [\melle{\ell+1}]} \|\Whatell{\ell}_{j,:}\|^2}{  \|( \Whatell{\ell})^\top \Id_{q^{(\ell)}}\Whatell{\ell}\|_{\mathrm{op}} } + (1-\theta)\melle{\ell}\right)^{-1}$\footnote{$\|\cdot\|_{\op}$ represents the operator norm of a matrix (the largest absolute singular value).}.
\end{Theorem}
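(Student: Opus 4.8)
The plan is to bound $\|\fhat-\fsharp\|_n$ by a telescoping sum of per-layer reconstruction errors, to control each such error by the degrees of freedom via a leverage-score sampling argument (in the spirit of the random-features / ridge-regression equivalence \cite{bach2017equivalence}), and to propagate the errors forward through the deeper layers using the $1$-Lipschitz and scale-invariance properties of $\eta$ together with the row-norm constraints defining $\calF$. \emph{Step 1 (layer-wise decomposition).} Write $\phi^{\sharp(\ell)}$ for the post-activation signal of $\fsharp$ at layer $\ell$, which plays the role of $\phi^{(\ell)}_{\Jhatell{\ell}}$ in $\fhat$, and set $\delta_\ell:=\|\phi^{\sharp(\ell)}-\phi^{(\ell)}_{\Jhatell{\ell}}\|_n$ and $\epsilon_\ell:=\|\Whatell{\ell}_{\Jhatell{\ell+1},F}(\hat{A}^{(\ell)}_{\Jhatell{\ell}}\phi^{(\ell)}_{\Jhatell{\ell}}-\phi^{(\ell)})\|_n$. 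Using $\Wsharpell{\ell}=\Whatell{\ell}_{\Jhatell{\ell+1},F}\hat{A}^{(\ell)}_{\Jhatell{\ell}}$, $\bsharpell{\ell}=\bhatell{\ell}_{\Jhatell{\ell+1}}$, the $1$-Lipschitzness of $\eta$ (Assumption~\ref{ass:EtaCondition}), and the triangle inequality, one gets
\[
\delta_{\ell+1} \le \|\Wsharpell{\ell}\|_{\op}\,\delta_\ell + \epsilon_\ell
\]
for $\ell=2,\dots,L-1$, and the analogous bound $\|\fhat-\fsharp\|_n\le\|\Wsharpell{L}\|_{\op}\delta_L+\epsilon_L$ at the (activation-free) output layer. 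Since the first layer is not compressed, $\delta_2=0$, so unrolling yields $\|\fhat-\fsharp\|_n\le\sum_{\ell=2}^L\epsilon_\ell\prod_{\ell'=\ell+1}^L\|\Wsharpell{\ell'}\|_{\op}$. It therefore suffices to bound $\epsilon_\ell$ (which is, up to the diagonal reweighting built into $Z^{(\ell)}$, $\sqrt{L^{(\mathrm B)}_\tau(\Jhatell{\ell})}$) and $\|\Wsharpell{\ell}\|_{\op}$.

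\emph{Step 2 (per-layer loss via the degrees of freedom).} Since $\Jhatell{\ell}$ minimizes $\theta L^{(\mathrm A)}_\tau+(1-\theta)L^{(\mathrm B)}_\tau$ over the feasible set of \eqref{eq:minLjlambda}, it is enough to exhibit one feasible index multiset with a small objective. I would draw $\mhatell$ indices i.i.d.\ from the leverage-score distribution $(\welle{\ell}_j)_{j\in[\mell]}$ and apply a matrix Bernstein / Chernoff bound to the regularized empirical Gram matrix $\Sigmahat_{J,J}+\Id_\tau$. With the threshold $\mhatell\ge 5\Nhatell{\ell}(\lambda_\ell)\log(80\Nhatell{\ell}(\lambda_\ell))$ from \eqref{eq:lambdaelldef}, this shows that with positive probability the sampled $J$ (i) has $|J|=\mhatell$ and $\sum_{j\in J}(\welle{\ell}_j)^{-1}\le\tfrac{5}{3}\mell\mhatell$, which simultaneously certifies feasibility of the constrained problem, and (ii) satisfies $L^{(\mathrm A)}_\tau(J)\lesssim\lambda_\ell\Nhatell{\ell}(\lambda_\ell)$ and $L^{(\mathrm B)}_\tau(J)\lesssim\lambda_\ell\Nhatdell{\ell}(\lambda_\ell;Z^{(\ell)})$, hence $L^{(\theta)}_\tau(\Jhatell{\ell})\lesssim\lambda_\ell\Nelltheta(\lambda_\ell)$.

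\emph{Step 3 (conversion and telescoping).} The choices $\tau\leftarrow\mhatell\lambda_\ell\welle{\ell}$ and $Z^{(\ell)}$ — with the normalizing factors $\sqrt{\mell\,q^{(\ell)}_{j_k}}/\max_{j'}\|\Whatell{\ell}_{j',:}\|$ — are tailored so that both $\epsilon_\ell^2$ and $\|\Wsharpell{\ell}\|_{\op}^2$ are controlled by $L^{(\theta)}_\tau(\Jhatell{\ell})$ together with $\|(\Whatell{\ell})^\top\Id_{q^{(\ell)}}\Whatell{\ell}\|_{\op}$ and $\max_{j'}\|\Whatell{\ell}_{j',:}\|$. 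Combining with $\max_j\|\Whatell{\ell}_{j,:}\|\le R/\sqrt{\melle{\ell+1}}$ and $\mhat{\ell+1}\le\melle{\ell+1}$ (which also gives $\|\Whatell{\ell}_{\Jhatell{\ell+1},F}\|_{\op}\le R$), one obtains $\epsilon_\ell\le\Rbar\sqrt{\cZtheta\,\lambda_\ell}$ and $\|\Wsharpell{\ell}\|_{\op}\le\Rbar\sqrt{\cZtheta}$ with $\Rbar=\sqrt{\conedelta}\,R$ for a universal constant $\conedelta$. Plugging these into the telescoped inequality of Step 1 collapses the product to $\Rbar^{L-\ell+1}\sqrt{\prod_{\ell'=\ell}^L\zeta_{\ell',\theta}}\,\sqrt{\lambda_\ell}$, giving \eqref{eq:L2comressionbound}.

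\emph{Main obstacle.} The technical core is Step 2 together with its tight conversion in Step 3: showing that leverage-score sampling simultaneously certifies feasibility of the extra constraint $\sum_{j\in J}(\welle{\ell}_j)^{-1}\le\tfrac{5}{3}\mell\mhatell$ and attains the sharp $O(\lambda_\ell\Nelltheta(\lambda_\ell))$ objective with the exact constants behind the $5\log(80\,\cdot)$ threshold in \eqref{eq:lambdaelldef}, and then tracking the diagonal reweighting through $Z^{(\ell)}$, $q^{(\ell)}$, and $\tau$ so that the per-layer propagation factor reduces to $\Rbar\sqrt{\cZtheta}$ rather than to a crude polynomial in the layer widths. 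Once these norm bounds are in place, the recursion and telescoping of Step 1 are routine.
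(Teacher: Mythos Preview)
Your high-level plan matches the paper: Step~2 (leverage-score sampling \`a la \cite{bach2017equivalence} to certify feasibility of the extra constraint and to obtain $L^{(\theta)}_\tau(\Jhatell{\ell})\lesssim\lambda_\ell\Nelltheta(\lambda_\ell)$) and the layer-wise telescoping of Step~1 are exactly what the proof does. The gap is in Step~3.

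You assert $\|\Wsharpell{\ell}\|_{\op}\le\Rbar\sqrt{\cZtheta}$ and $\epsilon_\ell\le\Rbar\sqrt{\cZtheta\,\lambda_\ell}$, but the only handle you have on $\Wsharpell{\ell}=\Whatell{\ell}_{\Jhatell{\ell+1},F}\hat{A}_{\Jhatell{\ell}}$ comes from the regularizer $\|\cdot\|_\tau^2$ with $\tau=\mhatell\lambda_\ell\welle{\ell}$, and that controls only the \emph{leverage-score weighted} quantities $\sum_{j\in\Jhatell{\ell+1}}q_j^{(\ell)}\|\Whatell{\ell}_{j,:}\hat{A}_{\Jhatell{\ell}}\|_{\welle{\ell}}^2$ and $\sum_{j\in\Jhatell{\ell+1}}q_j^{(\ell)}\|\Whatell{\ell}_{j,:}(\phi-\hat{A}_{\Jhatell{\ell}}\phi_{\Jhatell{\ell}})\|_n^2$. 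Since individual scores $\welle{\ell}_k$ can be arbitrarily small, you cannot pass from the $\welle{\ell}$-weighted column norm to the Euclidean one, so neither the operator nor the Frobenius norm of $\Wsharpell{\ell}$ is bounded by what you wrote. Concretely, after invoking the constraint $\sum_{j\in\Jhatell{\ell+1}}(\welle{\ell+1}_j)^{-1}\le\tfrac{5}{3}\melle{\ell+1}\mhat{\ell+1}$ the bounds one actually gets still carry factors $\mhat{\ell+1}/\mhat{\ell}$ (for the propagation term) and $\mhat{\ell+1}$ (for the layer error), not~$1$.

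The missing ingredient is a \emph{rescaling of the whole network} via the scale invariance of $\eta$. The paper rewrites both $\fhat$ and $\fsharp$ with weights $\Wtilell{\ell}:=\Id_{(\welle{\ell+1})^{-1/2}}\Wsharpell{\ell}\Id_{(\welle{\ell})^{1/2}}$ (and the analogous $\Whatdell{\ell}$ for $\fhat$); by Assumption~\ref{ass:EtaCondition} this leaves both functions unchanged. In this representation the Lipschitz factor of layer $\ell$ is $\|\Wtilell{\ell}\|_{\F}^2=\sum_{j\in\Jhatell{\ell+1}}(\welle{\ell+1}_j)^{-1}\|\Whatell{\ell}_{j,:}\hat{A}_{\Jhatell{\ell}}\|_{\welle{\ell}}^2\le\conedelta\,\cZtheta\,(\mhat{\ell+1}/\mhat{\ell})R^2$, i.e.\ \emph{exactly} the weighted quantity the regularizer bounds, and the $\ell$-th layer error becomes $\le\conedelta\lambda_\ell\cZtheta\,\mhat{\ell+1}R^2$. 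When you multiply these along the telescoped sum the $\mhat{\ell'+1}/\mhat{\ell'}$ factors collapse (since $\mhat{L+1}=1$), yielding $\Rbar^{L-\ell+1}\sqrt{\prod_{\ell'=\ell}^L\zeta_{\ell',\theta}}\,\sqrt{\lambda_\ell}$. Without this rescaling I do not see how your Step~3 closes without either an uncontrolled dependence on $\min_k\welle{\ell}_k$ or extraneous width factors.
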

The proof is given in Appendix \ref{sec:CompressionErrorBoundProof}. 
To prove the theorem, we essentially need to use theories of random features in kernel methods \cite{bach2017equivalence,AISTATS:Suzuki:2018}.
The main message from the theorem is that
the approximation error induced by compression is directly controlled by
the degrees of freedom.
Since the degrees of freedom $\Nhatell{\ell}(\lambda)$ are a monotonically
decreasing function with respect to $\lambda$,
they become large as $\lambda$ decreases to $0$.
The behavior of the eigenvalues determines
how rapidly $\Nhatell{\ell}(\lambda)$ increases as $\lambda \to 0$.
We can see that if the eigenvalues $\hat{\mu}_1^{(\ell)} \geq \hat{\mu}_2^{(\ell)} \geq \dots$ rapidly decrease,
then the approximation error $\lambda_\ell$ can be much smaller for a given model size $\mhatell$.
In other words, $\fsharp$ can be much closer to the original network $\fhat$
if there are only a few large eigenvalues.

The quantity $\cZtheta$ characterizes how well the approximation error $\lambda_{\ell'}$ of the lower layers $\ell' \leq \ell$
propagates to the final output. 
We can see that a tradeoff between $\cZtheta$ and $\theta$ appears.
By a simple evaluation,
$\Nelltheta$ in the numerator of $\cZtheta$ 
is bounded by $\mell$; thus, $\theta =1$ gives $\cZtheta \leq 1$.
On the other hand, the term
$\frac{ \max_{j \in [\melle{\ell+1}]} \|\Whatell{\ell}_{j,:}\|^2}{  \|( \Whatell{\ell})^\top \Id_{q^{(\ell)}}\Whatell{\ell}\|_{\mathrm{op}} }$
takes a value between $\melle{\ell + 1}$ and 1; thus,
$\theta = 1$ is not necessarily the best choice to maximize the denominator.
From this consideration, we can see that the value of $\theta$ that best minimizes $\cZtheta$
exists between $0$ and $1$, which supports our numerical result (Fig.~\ref{fig:ThetaVSAcc}).
In any situation, small degrees of freedom give a small $\cZtheta$, leading to a sharper bound.

\subsection{Generalization Error Analysis}

Here, we derive the generalization error bound of the compressed network
with respect to the population risk.
We will see that a bias--variance tradeoff induced by
network compression appears.
As usual, we train a network through the training error
$
\widehat{\Psi}(f) := \frac{1}{\ntr} \sum_{i=1}^{\ntr} \psi(y_i,f(x_i))$,
where $\psi:\Real \times \Real \to \Real$ is a loss function.
Correspondingly, the expected error is denoted by $\Psi(f) := \EE[\psi(Y,f(X))]$, where the expectation is taken with respect to
$(X,Y) \sim P$.
Our aim here is to bound the generalization error $\Psi(\fsharp)$ of the compressed network.
Let the marginal distribution of $X$ be $\Px$ and that of $y$ be $\Py$.
First, we assume the Lipschitz continuity for the loss function $\psi$.
\begin{Assumption}
\label{ass:LipschitzLoss}
The loss function $\psi$ is $\rho$-Lipschitz continuous:
$|\psi(y,f) - \psi(y,f')| \leq \rho |f - f'|~
(\forall y \in \mathrm{supp}(\Py),~~\forall f,f' \in \Real)$.
The support of $\Px$ is bounded: 
$
\|x\| 
\leq D_x~~(\forall x \in \mathrm{supp}(\Px)).
$
\end{Assumption}

For a technical reason, 
we assume the following condition for the spectral pruning algorithm.
\begin{Assumption}
\label{ass:AlgAssump}
We assume that $0 \leq \theta \leq 1$ is appropriately chosen so that $\cZtheta$ in Theorem \ref{thm:WhatellWAphiError} satisfies
$\cZtheta \leq 1$ almost surely, and spectral pruning is solved under the condition
 $\sum_{j \in J} (\welle{\ell}_j)^{-1} \leq
\frac{5}{3}\mell \mhatell 
$ on the index set $J$. 
\end{Assumption}
As for the choice of $\theta$, this assumption is always satisfied at least by the backward procedure.
The condition on the linear constraint on $J$ is merely to ensure the leverage scores are balanced for the chosen index.
Note that the bounds in Theorem \ref{thm:WhatellWAphiError} can be achieved even with this condition.

If $L_\infty$-norm of networks is loosely evaluated, 
the generalization error bound of deep learning can be unrealistically large because there appears $L_\infty$-norm in its evaluation.  
However, we may consider a truncated estimator $[\![ \fhat(x)]\!] := \max\{-M,\min\{M,\fhat(x)\}\}$ for sufficiently large $0 < M \leq \infty$
to moderate the $L_\infty$-norm (if $M = \infty$, this does not affect anything).
Note that the truncation procedure does not affect the classification error for a classification task.
To bound the generalization error, we define $\deltanone$ and $\deltantwo$ 
for 
$
(\mhat{2},\dots,\mhat{L})$ and $(\lambda_2,\dots,\lambda_L)$
satisfying relation \eqref{eq:lambdaelldef} as \footnote{$\logone(x) = \max\{1,\log(x)\}$.}
\begin{align*}
&\textstyle
\deltanone  
= \sum_{\ell = 2}^L
\big( \Rbar^{L-\ell + 1}
\text{\small $\sqrt{\prod_{\ell' = \ell}^L \zeta_{\ell',\theta}} \big)$ } \sqrt{ \lambda_\ell},~~
\\ &
\deltantwo^2 = 
\frac{1}{n} \sum_{\ell=1}^L \mhat{\ell}  \mhat{\ell + 1}
\logone \! \left(
{\textstyle
1 +
\frac{4  \hat{G}\max\{\Rbar,\Rbarb\} }{\Rhatinf}
}
 \right), 
\end{align*}
where
%
$
\Rhatinf :=  
\min\{ \Rbar^L D_x + \sum_{\ell = 1}^L \Rbar^{L - \ell} \Rbarb,M\}, ~
\hat{G} := L \Rbar^{L-1} D_x + \sum_{\ell = 1}^L \Rbar^{L - \ell}
$
for $\Rbar = \sqrt{\conedelta } R$ and $\Rbarb = \sqrt{\conedelta} \Rb$ with the constants $\conedelta$
introduced in Theorem \ref{thm:WhatellWAphiError}.
Let
$
\Rnt :=  \frac{ 1 }{n}
\left(
t
+
\sum\nolimits_{\ell=2}^L \log(\mell)
\right)
$ for $t > 0$.
Then, we obtain the following generalization error bound for the compressed network $\fsharp$.

\begin{Theorem}[Generalization error bound of the compressed network]
\label{thm:GenErrorCompNet}
Suppose that Assumptions \ref{ass:EtaCondition}, \ref{ass:LipschitzLoss}, 
and \ref{ass:AlgAssump} are satisfied.
Then, the spectral pruning method presented in Theorem \ref{thm:WhatellWAphiError}
satisfies the following generalization error bound.
There exists a universal constant $C_1 >0$ such that for any $t >0$, it holds that
\begin{align*}
\Psi([\![\fsharp]\!])
\leq  &
\hat{\Psi}([\![\fhat]\!])
+
\rho \Big\{
\deltanone  + C_1 \Rhatinf (\deltantwo + \deltantwo ^2
 +  \sqrt{\Rnt}
)   \Big\}  \\
\lesssim  & 
 \hat{\Psi}([\![\fhat]\!]) \!+\! \sum\nolimits_{\ell=2}^L \!\! \sqrt{\lambda_\ell}
\!+\!
{\textstyle \sqrt{\frac{\sum_{\ell=1}^L \mhat{\ell+1} \mhat{\ell}}{n}{\textstyle \log_+(\hat{G})}}}, 
\end{align*}
uniformly over all choices of $\boldsymbol{\mhat{}} = (\mhat{1},\dots,\mhat{L})$
with probability $1 - 2 e^{-t}$.

\end{Theorem}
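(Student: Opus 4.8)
The plan is to decompose the generalization error into three pieces via a standard triangle inequality: an empirical-risk term, an approximation (bias) term, and an estimation (variance) term. Concretely, I would write
\begin{align*}
\Psi([\![\fsharp]\!]) &\leq \hat{\Psi}([\![\fsharp]\!]) + \big( \Psi([\![\fsharp]\!]) - \hat{\Psi}([\![\fsharp]\!]) \big) \\
&\leq \hat{\Psi}([\![\fhat]\!]) + \big( \hat{\Psi}([\![\fsharp]\!]) - \hat{\Psi}([\![\fhat]\!]) \big) + \sup_{f \in \calG} \big( \Psi([\![f]\!]) - \hat{\Psi}([\![f]\!]) \big),
\end{align*}
where $\calG$ is a suitable class of compressed networks with the prescribed widths $(\mhat{\ell})$ and norm bounds inherited (up to the universal constant $\conedelta$) from $\fhat$. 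The middle term is controlled by the $\rho$-Lipschitz continuity of $\psi$ together with the $\|\cdot\|_n$-approximation bound of Theorem~\ref{thm:WhatellWAphiError}: since $|\hat{\Psi}([\![\fsharp]\!]) - \hat{\Psi}([\![\fhat]\!])| \leq \rho \|\fhat - \fsharp\|_n \leq \rho \deltanone$, using that truncation is $1$-Lipschitz. So the bias term delivers exactly the $\rho\, \deltanone$ contribution.

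For the variance term I would invoke a uniform law of large numbers over $\calG$ via Rademacher complexity: by the standard symmetrization and contraction (Ledoux–Talagrand) arguments, $\sup_{f\in\calG}(\Psi([\![f]\!]) - \hat\Psi([\![f]\!]))$ is bounded, with probability $1-2e^{-t}$, by $2\rho\,\mathfrak{R}_n(\calG) + C\Rhatinf\sqrt{t/n}$, where $\mathfrak{R}_n$ is the empirical Rademacher complexity of the truncated class and $\Rhatinf$ is the uniform $L_\infty$-bound on $[\![f]\!]$ (hence the boundedness constant in the Bernstein/bounded-differences step). The Rademacher complexity of a norm-constrained $L$-layer network with widths $(\mhat{\ell})$ and per-layer operator/Frobenius bounds of order $\Rbar$ is standard: a Dudley integral / covering-number argument gives a bound of order $\Rhatinf \sqrt{\frac{\sum_\ell \mhat{\ell}\mhat{\ell+1}}{n} \log_+(\hat{G})}$, where $\hat{G}$ plays the role of the Lipschitz constant of the network map controlling the covering radius at each layer. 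This is exactly the $\deltantwo$ (and $\deltantwo^2$, coming from the quadratic term in the local-Rademacher / Bernstein refinement) contribution, and the additive $\sqrt{\Rnt}$ absorbs the confidence term plus the union bound over the discrete choices of widths (the $\sum_{\ell}\log(\mell)$ accounts for choosing each $\mhat{\ell}\in[\mell]$).

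The main obstacle is making the uniform bound hold \emph{simultaneously over all choices of} $\boldsymbol{\mhat{}}$ while keeping the covering-number / Rademacher estimate tight enough to yield the clean $\sqrt{(\sum_\ell \mhat{\ell+1}\mhat{\ell})/n}$ rate rather than a cruder per-parameter VC-type bound. This requires (i) a careful layer-peeling covering argument in which the radius at layer $\ell$ is scaled by the product of operator norms of the remaining layers — this is where $\hat{G}$ and the $\logone(\cdot)$ factor arise — and (ii) a union bound over the (at most $\prod_\ell \mell$) width configurations, paid for inside the logarithm and via $\Rnt$. A secondary technical point is checking that the compressed network $\fsharp$ produced by spectral pruning genuinely lies in the constrained class $\calG$ with the stated norm bounds: this follows from Assumption~\ref{ass:AlgAssump} (the leverage-score balance constraint $\sum_{j\in J}(\welle{\ell}_j)^{-1}\leq \frac53 \mell\mhat{\ell}$) together with the explicit form $\Wsharpell{\ell} = \Well{\ell}_{\Jhatell{\ell+1},F}\hat{A}^{(\ell)}_{\Jhatell{\ell}}$ and the bound on $\|\hat{A}_J\|$ implicit in the proof of Theorem~\ref{thm:WhatellWAphiError}, which I would restate as a lemma. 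Once these pieces are assembled, combining the three terms and simplifying the constants yields the displayed inequality, and the final $\lesssim$ line follows by retaining only dominant terms and using $\cZtheta\leq 1$ (Assumption~\ref{ass:AlgAssump}) so that $\deltanone \lesssim \sum_\ell \sqrt{\lambda_\ell}$.
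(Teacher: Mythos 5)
Your proposal follows essentially the same route as the paper's own proof: the same three-term decomposition, the $\rho$-Lipschitz loss argument giving $\rho\,\deltanone$ for the middle term, a symmetrization--contraction--covering-number (Dudley) bound for the uniform deviation term with a union bound over width configurations producing $\Rnt$, and the verification via Assumption \ref{ass:AlgAssump} and the norm bounds from Theorem \ref{thm:WhatellWAphiError} that $\fsharp$ lies in the norm-constrained class. No substantive differences or gaps.
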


The proof is given in Appendix \ref{sec:ProofOfGeneralizationerrorbound}. 
From this theorem, the generalization error of $\fsharp$ is upper-bounded by the training error of the {\it original} network $\fhat$ (which is usually small)
and an additional term.
By Theorem \ref{thm:WhatellWAphiError}, $\deltanone$ represents the approximation error between $\fhat$ and $\fsharp$;
hence, it can be regarded as a {\it bias}.
The second term $\deltantwo$ is the {\it variance} term
induced by the sample deviation.
It is noted that the variance term $\deltantwo$ only depends on the
{\it size of the compressed network rather than the original network size}.
On the other hand, a naive application of the theorem implies 
$\Psi([\![\fhat]\!]) - \hat{\Psi}([\![\fhat]\!]) \leq
 \mathrm{\tilde{O}}\big(  \sqrt{\frac{1}{n}\sum_{\ell=1}^L \melle{\ell+1} \melle{\ell}} \big)$ for the original network $\fhat$,
which coincides with the VC-dimension based bound \cite{bartlett2017nearly} but is much larger than $\deltantwo$
when $\mhat{\ell} \ll \melle{\ell}$.
Therefore, the variance is significantly reduced by model compression, resulting in a much improved generalization error.
Note that the relation between $\deltanone$ and $\deltantwo$ is a tradeoff
due to the monotonicity of the degrees of freedom.
When $\mhatell$ is large, the bias $\deltanone$ becomes small owing to the monotonicity of the degrees of freedom,
but the variance $\deltantwo(\boldsymbol{\mhat{}})$ will be large. 
Hence, we need to tune the size $(\mhatell)_{\ell=1}^L$ to obtain the best generalization error
by balancing the bias ($\deltanone$) and variance ($\deltantwo$).

The generalization error bound is uniformly valid over the choice of $\boldsymbol{\mhat{}} $
(to ensure this, the term $\Rnt$ appears).
Thus, $\boldsymbol{\mhat{}} $ can be arbitrary and chosen in a data-dependent manner.
This means that the bound is {\it a posteriori},
and the best choice of $\boldsymbol{\mhat{}} $ can depend on the trained network.


\section{Relation to Existing Work}


A seminal work \cite{pmlr-v80-arora18b} showed a generalization error bound based on how the network can be compressed.
Although the theoretical insights provided by their analysis are quite instructive,
the theory does not give a practical compression method.
In fact, a random projection is proposed in the analysis, but
it is not intended for practical use.
The most difference is that their analysis exploits the near low rankness of the weight matrix $\Well{\ell}$,
while ours exploits the near low rankness of the covariance matrix $\Covhatell{\ell}$.
They are not directly comparable; thus, we numerically compare the intrinsic dimensionality of both with a VGG-19 network trained on CIFAR-10.
Table \ref{tab:AroraOursComp} summarizes a comparison of the intrinsic dimensionalities.
For our analysis, we used $\Nhatell{\ell}(\lambda_\ell) \Nhatell{\ell+1}(\lambda_{\ell + 1}) k^2 $ for the intrinsic dimensionality
of the $\ell$-th layer, where $k$ is the kernel size\footnote{We omitted quantities related to the depth $L$ and $\log$ term, but
the intrinsic dimensionality of \cite{pmlr-v80-arora18b} also omits these factors.}.
This is the number of parameters in the $\ell$-th layer for the width $\mhat{\ell} \simeq \Nhatell{\ell}(\lambda_\ell)$ where 
$\lambda_\ell$ was set as $\lambda_\ell = 10^{-3} \times \Tr[\Sigmahat_{(\ell)}]$, which is sufficiently small.
We can see that the quantity based on our degrees of freedom give significantly small values in almost all layers.

\begin{table}
\begin{tabular}{crrr}
\hline
Layer & Original &  \cite{pmlr-v80-arora18b} & Spec Prun \\
\hline
1 & 1,728 & 1,645 & 1,013 \\
4& 147,456& 644,654	& 84,499  \\
6& 589,824 & 3,457,882 & 270,216  \\
9 & 1,179,648 &	36,920 & 50,768 	\\
12 & 2,359,296& 22,735 & 4,583  \\
15 & 2,359,296 & 26,584 & 3,886 
 \\
\hline
\end{tabular}
\caption{Comparison of the intrinsic dimensionality of
our degrees of freedom and existing one. 
They are computed for a VGG-19 network trained on CIFAR-10.}
\label{tab:AroraOursComp}
\end{table}

The PAC-Bayes bound \cite{DBLP:conf/uai/DziugaiteR17,zhou2018nonvacuous} 
is also a promising approach for obtaining the nonvacuous generalization error bound of a compressed network.
However, these studies ``assume'' the existence of effective compression methods and do not provide any specific algorithm.
%
\cite{AISTATS:Suzuki:2018,suzuki2020compression} also pointed out the importance of the degrees of freedom for analyzing the generalization error of deep learning
but did not give a practical algorithm. 

\begin{figure*}[ht]
\begin{center}
\begin{minipage}[t]{0.495\textwidth}
\begin{center}
\subcaptionbox{Eigenvalue distributions in each layer for MNIST and CIFAR-10.
\label{fig:EigenvaluesMNISTandCIFAR}}{
\includegraphics[width=5.5cm]{./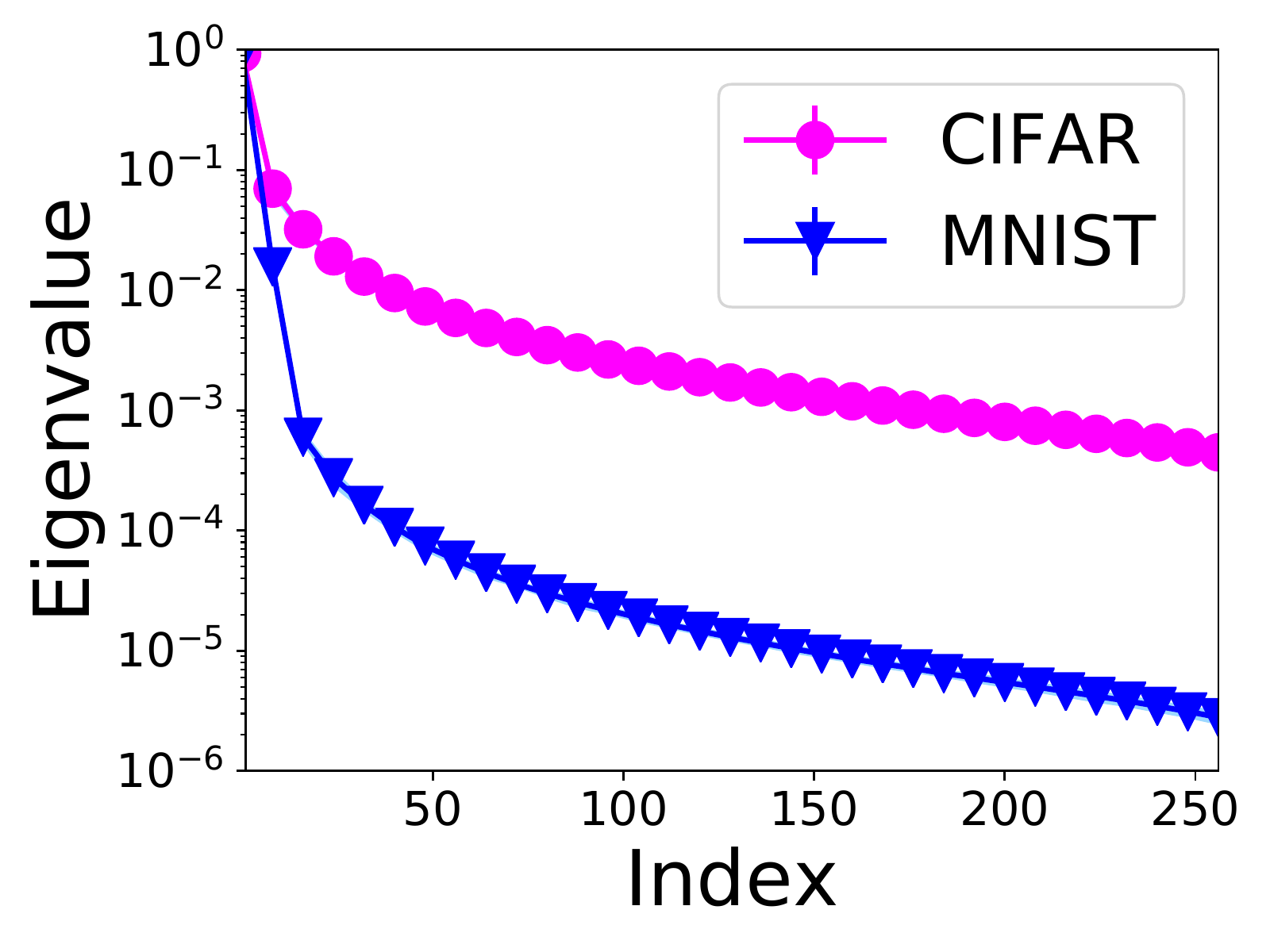}
}
\subcaptionbox{Approximation error with its s.d. versus the width $m^\sharp$
\label{fig:AccuracyInMNISTandCIFAR}
}{
\includegraphics[width=5.5cm]{./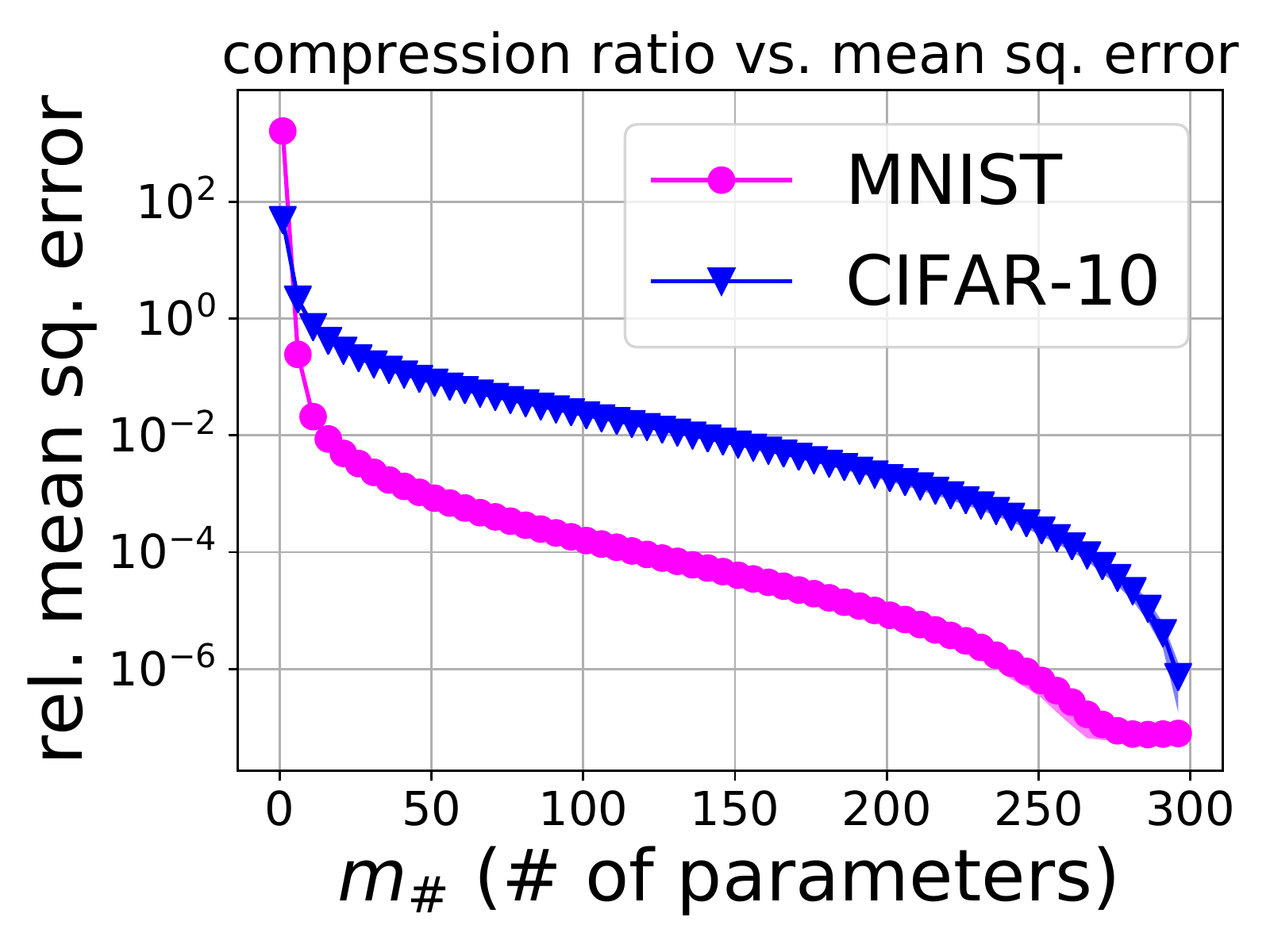} 
}
\caption{Eigenvalue distribution and compression ability
of a fully connected network in MNIST and CIFAR-10.
}
\end{center}
\end{minipage}
~~
\begin{minipage}[t]{.475\textwidth}
\centering
\subcaptionbox{Accuracy versus $\lambda_\ell$ in CIFAR-10 for each compression rate.
\label{fig:LambdaVSAcc}
}{
\includegraphics[width=5.5cm]{./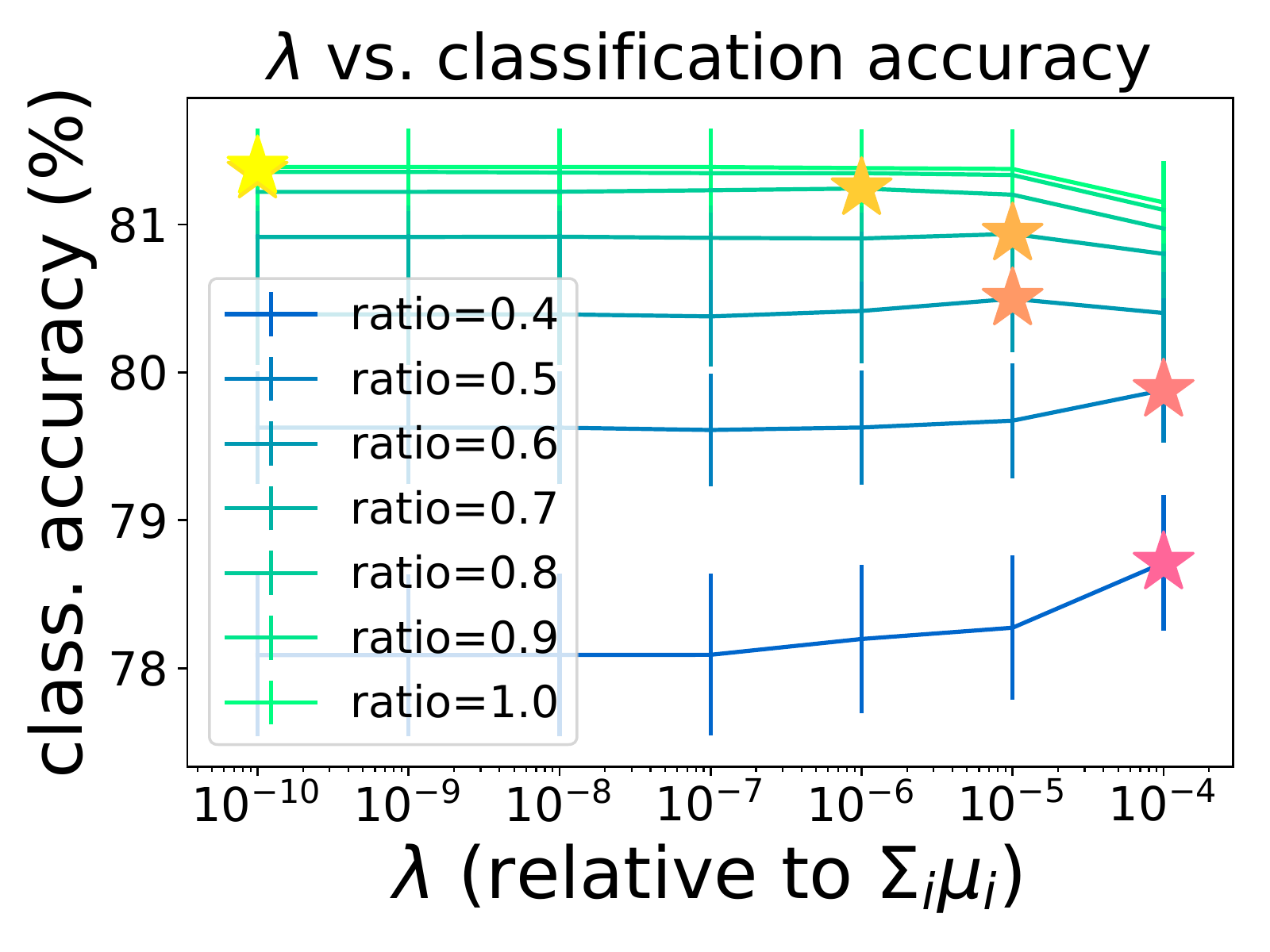} 
}
~
\subcaptionbox{Accuracy versus $\theta$ in CIFAR-10 for each compression rate.
\label{fig:ThetaVSAcc}
}{
\includegraphics[width=5.5cm]{./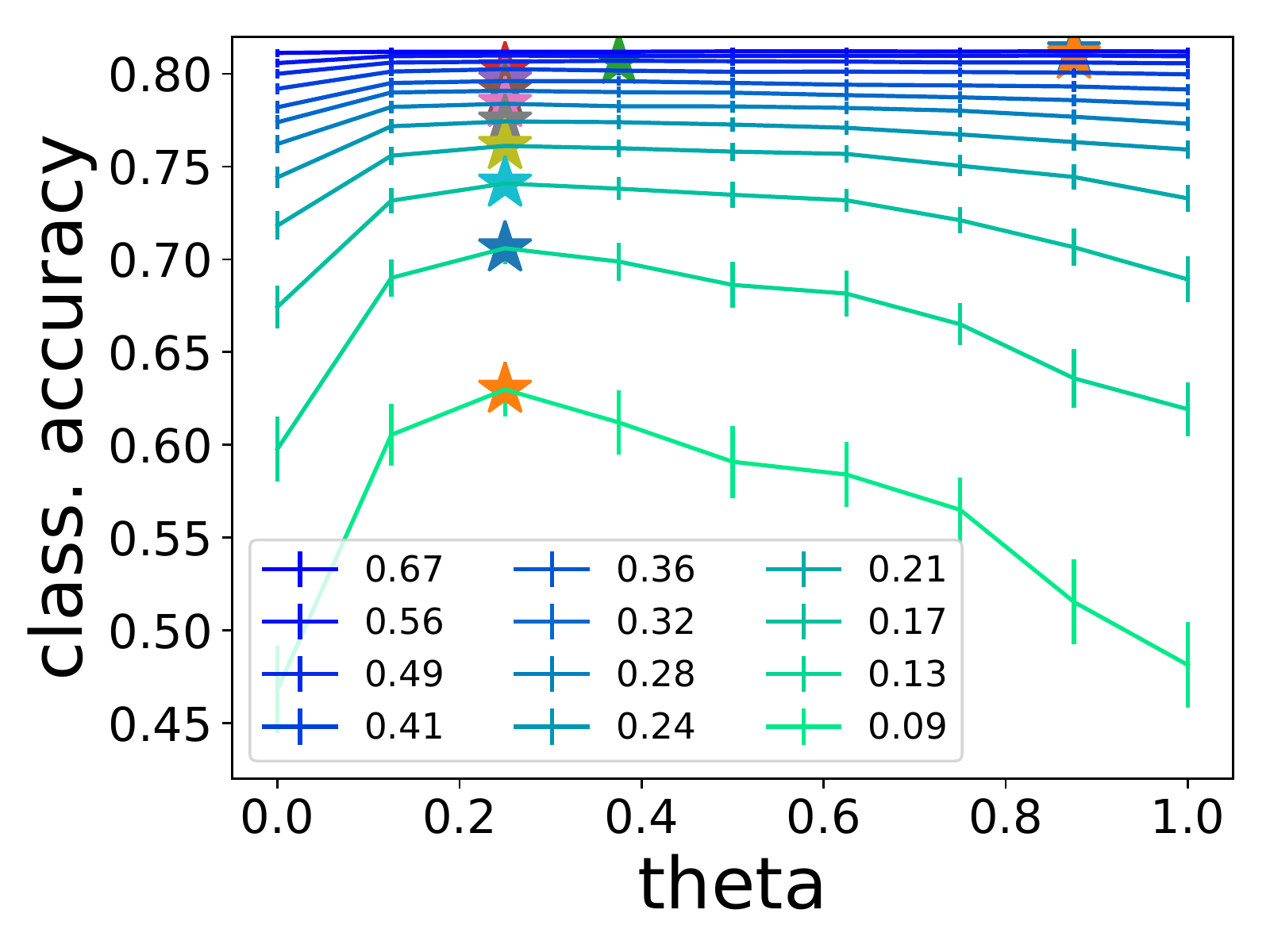} 
}
\caption{
Relation between accuracy and the hyper parameters $\lambda_\ell$ and $\theta$.
The best $\lambda$ and $\theta$ are indicated by the star symbol.
}
\end{minipage}
\end{center}
\end{figure*}


\section{Numerical Experiments}
\label{sec:NumExp}

In this section, we conduct numerical experiments to show the validity of our theory and the effectiveness of the proposed method.


\subsection{Eigenvalue Distribution and Compression Ability}
We show how the rate of decrease in the eigenvalues affects the compression accuracy to justify our theoretical analysis.
We constructed a network (namely, NN3) consisting of three hidden fully connected layers with
widths $(300,1000,300)$ following the settings in \cite{NIPS2017_6910} 
and trained it with 60,000 images in MNIST and 50,000 images in CIFAR-10.
Figure \ref{fig:EigenvaluesMNISTandCIFAR} shows the magnitudes of the eigenvalues
of the 3rd hidden layers of the networks trained for each dataset
(plotted on a semilog scale).
The eigenvalues are sorted in decreasing order, and they are normalized
by division by the maximum eigenvalue.
We see that eigenvalues for MNIST decrease much more rapidly than those for CIFAR-10.
This indicates that MINST is ``easier'' than CIFAR-10 because
the degrees of freedom (an intrinsic dimensionality) of the network trained on
MNIST are relatively smaller than those trained on CIFAR-10.
Figure \ref{fig:AccuracyInMNISTandCIFAR} presents the (relative) compression error $\|\fhat - \fsharp\|_{n}/\|\fhat\|_n$   
versus the width $\mhat{3}$ of the compressed network 
where we compressed only the 3rd layer and $\lambda_3$ was fixed to a constant $10^{-6} \times \Tr[\Sigmahat_{(\ell)}]$ and $\theta = 0.5$. 
It shows a rapid decrease in the compression error for MNIST than CIFAR-10 (about 100 times smaller).
This is because MNIST has faster eigenvalue decay than CIFAR-10.

Figure \ref{fig:LambdaVSAcc} shows the relation between the test classification accuracy and $\lambda_\ell$.
It is plotted for a VGG-13 network trained on CIFAR-10.
We chose the width $\mhat{\ell}$ that gave the best accuracy for each $\lambda_\ell$ under the constraint of the compression rate 
(relative number of parameters).
We see that as the compression rate increases, the best $\lambda_\ell$ goes down.
Our theorem tells that $\lambda_\ell$ is related to the compression error through \eqref{eq:lambdaelldef},
that is, as the width goes up, $\lambda_\ell$ must goes down. This experiment supports the theoretical evaluation.
Figure \ref{fig:ThetaVSAcc} shows the relation between the test classification accuracy and
the hyperparameter $\theta$.
We can see that the best accuracy is achieved around $\theta=0.3$ for all compression rates,
which indicates the superiority of the ``combination'' of input- and output-information loss
and supports our theoretical bound.
For low compression rate, the choice of $\lambda_\ell$ and $\theta$ does not affect the result so much,
which indicates the robustness of the hyper-parameter choice.

\subsection{Compression on ImageNet Dataset}
We applied our method to the ImageNet (ILSVRC2012) dataset \cite{deng2009imagenet}.
We compared our method using the ResNet-50 network \cite{he2016deep} (experiments for VGG-16 network \cite{simonyan2014very} are also shown in Appendix \ref{sec:ImageNetAppendix}).
Our method was compared with the following pruning methods: ThiNet \cite{iccv2017ThiNet},
NISP \cite{yu2018nisp}, and sparse regularization \cite{he2017channel} (which we call Sparse-reg).
As the initial ResNet network, we used two types of networks: ResNet-50-1 and ResNet-50-2.
For training ResNet-50-1, we followed the experimental settings in \cite{iccv2017ThiNet} and \cite{yu2018nisp}.
During training, images were resized as in \cite{iccv2017ThiNet}.
to 256 $\times$ 256; then, a 224 $\times$ 224 random crop was fed into the network.
In the inference stage, we center-cropped the resized images to 224 $\times$ 224.
For training ResNet-50-2, we followed the same settings as in \cite{he2017channel}.
In particular, images were resized such that the shorter side was 256, and a center crop of 224 $\times$ 224 pixels was used for testing.
The augmentation for fine tuning was a 224 $\times$ 224 random crop and its mirror.

We compared ThiNet and NISP for ResNet-50-1 (we call our model for this situation ``Spec-ResA'') and Sparse-reg for ResNet-50-2
(we call our model for this situation ``Spec-ResB'')
for fair comparison.
The size of compressed network $\fsharp$ was determined to be as close to the compared network as possible
(except, for ResNet-50-2, we did not adopt the ``channel sampler'' proposed by \cite{he2017channel} in the first layer of the residual block; hence, our model became slightly larger).
The accuracies are borrowed from the scores presented in each paper, and thus we used different models because the original papers of each model reported for each different model. 
We employed the simultaneous procedure for compression.
After pruning, we carried out fine tuning over 10 epochs,
where the learning rate was $10^{-3}$ for the first four epochs, $10^{-4}$ for the next four epochs, and $10^{-5}$ for the last two epochs.
We employed $\lambda_\ell = 10^{-6}\times \Tr[\Sigmahat_{(\ell)}]$ and $\theta = 0.5$. 

\begin{table}
\label{tab:ImageNet-VGG-results}
\centering
\begin{tabular}{|@{\hspace{0.05cm}}l@{\hspace{0.05cm}}|
@{\hspace{0.1cm}}ll@{\hspace{0.2cm}}r@{\hspace{0.15cm}}r|}
\hline
Model & Top-1 & Top-5 & \# Param. & FLOPs\\
\hline
\hline
ResNet-50-1 
& 
72.89\% & 91.07\% & 25.56M & 7.75G \\
ThiNet-70  
& 72.04 \% & 90.67\% & 16.94M & 4.88G  \\
ThiNet-50  
& 71.01 \% & 90.02\% & 12.38M & 3.41G  \\
NISP-50-A & 72.68\%  &~~~~--- & 18.63M & 5.63G  \\
NISP-50-B &71.99\%  &~~~~--- & 14.57M & 4.32G  \\
\hline
{\bf {\Ours}-ResA} & {\bf 72.99\%} & {\bf 91.56\%} & 12.38M & 3.45G \\
\hline
\hline
ResNet-50-2 & 75.21\% & 92.21\% &  25.56M & 7.75G \\
Sparse-reg  wo/ ft  & ~~~~ ---& 84.2\%  & 19.78M & 5.25G \\
Sparse-reg  w/ ft  &~~~~ ---& 90.8\%  & 19.78M & 5.25G \\
\hline
{\bf {\Ours}-ResB} wo/ ft &
66.12\% & {\bf 86.67\%} & 20.69M &5.25G \\
{\bf {\Ours}-ResB} w/ ft &
74.04\% & {\bf 91.77\%} & 20.69M & 5.25G \\
\hline
\end{tabular}
\caption{Performance comparison of our method and existing ones for ResNet-50 on ImageNet.
``ft'' indicates fine tuning after compression.
}
\label{tab:ResNetComp}
\vspace{-0.2cm}
\end{table}


Table \ref{tab:ResNetComp} summarizes the performance comparison for ResNet-50.
We can see that for both settings, our method outperforms the others for about $1\%$ accuracy.
This is an interesting result because ResNet-50 is already compact \cite{iccv2017ThiNet} and 
thus there is less room to produce better performance. 
Moreover, we remark that all layers were simultaneously trained in our method,
while other methods were trained one layer after another.
Since our method did not adopt the channel sampler proposed by \cite{he2017channel},
our model was a bit larger. However, we could obtain better performance by combining it with our method.

\section{Conclusion}

In this paper,
we proposed a simple pruning algorithm for compressing a network
and gave its approximation and generalization error bounds using the {\it degrees of freedom}.
Unlike the existing compression based generalization error analysis, our analysis is compatible with a practically useful method
and further gives a tighter intrinsic dimensionality bound.
The proposed algorithm is easily implemented and only requires linear algebraic operations. 
The numerical experiments showed that the compression ability is related to the eigenvalue distribution, 
and our algorithm has 
favorable performance compared to existing methods. 

\section*{Acknowledgements}
TS was partially supported by MEXT Kakenhi (18K19793, 18H03201 and 20H00576)
and JST-CREST, Japan.

\bibliographystyle{named_mod}
\bibliography{main,main_2}

\onecolumn

\begin{center}
\bf \LARGE
---Appendix---
\end{center}
\appendix
\appendix 



\section{Extension to convolutional neural network}
\label{sec:convolution_extension}

An extension of our method to convolutional layers 
is a bit tricky.
There are several options, but to perform channel-wise pruning,
we used the following ``covariance matrix'' between channels in the experiments.
Suppose that a channel $k$ receives the input $\phi_{k;u,v}(x)$ where $1 \leq u \leq I_\tau,~1\leq v \leq I_h$ 
indicate the spacial index, then 
``covariance'' between the channels $k$ and $k'$ can be formulated as  
$\Sigmahat_{k,k'} =\frac{1}{\ntr}
\sum_{i=1}^{\ntr} (\frac{1}{I_\tau I_h} \sum_{u,v} \phi_{k;u,v}(x_{i}) \phi_{k';u,v}(x_{i}))$.
As for the covariance between an output channel $k'$ and an input channel $k$ 
(which corresponds to the $(k',k)$-th element of 
$Z^{(\ell)} \Sigmahat_{F,J} = \mathrm{Cov}(Z^{(\ell)} \phi(X),\phi_J(X))$ for the fully connected situation),
it can be calculated as 
$\Sigmahat_{k',k} =\frac{1}{\ntr}
\sum_{i=1}^{\ntr} (\frac{1}{I_\tau I_h} \sum_{u,v}
\frac{1}{I'_{(u,v)}} \sum_{u',v': (u,v) \in \mathrm{Res}(u',v')}
 (Z^{(\ell)} \phi(x_i))_{k';u',v'}(x_{i}) \phi_{k;u,v}(x_{i}))$,
where $\mathrm{Res}(u',v')$ is the receptive field of the location $u',v'$
in the output channel $k'$, and $I'_{(u,v)}$ are the number of locations $(u',v')$
that contain $(u,v)$ in their receptive fields.

\section{Proof of Theorem \ref{thm:WhatellWAphiError}}

\label{sec:CompressionErrorBoundProof}

The output of its internal layer (before activation) is denoted by
$\Fhat_{\ell}(x) = (\Whatell{\ell}\eta(\cdot) + \bhatell{\ell}) \circ \dots \circ (\Whatell{1} x + \bhatell{1}).$
We denote the set of row vectors of $Z^{(\ell)}$ by 
$\mathcal{Z}_\ell$, i.e., $\mathcal{Z}_\ell =\{Z^{(\ell)\top}_{1,:},\dots, Z^{(\ell)\top}_{m,:} \}$. 
Conversely, we may define $Z^{(\ell)}$ by specifying $\calZ_\ell$.

Here, we restate Theorem \ref{thm:WhatellWAphiError} in a complete form that contains both of backward procedure and simultaneous procedure.
\begin{Theorem}[Restated]
\label{thm:WhatellWAphiError:supple}

Assume that the regularization parameter $\tau$ in the pruning procedure \eqref{eq:minLjlambda} 
is defined by the leverage score $\tau \leftarrow \tau^{(\ell)} := \mhatell \lambda_\ell \welle{\ell}$. 

{(i) Backward-procedure:} 
Let $\mathcal{Z}_\ell$ for the output information loss be 
$\mathcal{Z}_\ell = \left\{ \frac{ \sqrt{\melle{\ell} q_j^{(\ell)}}}{\max_{j'} \|\Whatell{\ell}_{j',:}\|}  \Whatell{\ell}_{j,:} \mid j \in \Jhatell{\ell +1}\right\}$
where $q_j^{(\ell)} := \frac{( \welle{\ell+1}_j)^{-1}}{
\sum_{j' \in \Jhatell{\ell + 1}} (\welle{\ell+1}_{j'})^{-1}}~(j \in \Jhatell{\ell + 1})$ and $q_j^{(\ell)} = 0~(\text{otherwise})$,
and define 
$
\cZtheta  = 
\Nelltheta(\lambda_\ell)  \left( \theta \frac{ \max_{j \in [\melle{\ell+1}]} \|\Whatell{\ell}_{j,:}\|^2}{  \|( \Whatell{\ell})^\top \Id_{q^{(\ell)}}\Whatell{\ell}\|_{\mathrm{op}} } + (1-\theta)\melle{\ell}\right)^{-1}$\footnote{$\|\cdot\|_{\op}$ represents the operator norm of a matrix (the largest absolute singular value).}.
Then, 
if we solve the optimization problem \eqref{eq:minLjlambda}
with an additional constraint $\sum_{j \in J} (\welle{\ell}_j)^{-1} \leq 
\frac{5}{3}\mell \mhatell 
$ for the index set $J$, 
then the optimization problem is feasible, and
the overall approximation error of $\fsharp$
is bounded by
\begin{align}
\|\fhat - \fsharp\|_n
\leq  
\sum\nolimits_{\ell = 2}^L 
\left( \Rbar^{L-\ell + 1} \sqrt{ {\textstyle \prod_{\ell'=\ell}^L}\zeta_{\ell',\theta} } \right) \sqrt{\lambda_\ell},
\label{eq:L2comressionbound}
\end{align}
for $\Rbar = \sqrt{\conedelta}  R$ where $\conedelta$ is a universal constant.

(ii) Simultaneous-procedure: Suppose that there exists $\qWconst > 0$ such that 
\begin{align}
\|\Whatell{\ell}_{j,:}\|^2 \leq \qWconst R^2 \welle{\ell+1}_{j},
\label{eq:cscaleAssump}
\end{align} 
and we employ $\calZ_\ell = \{\Whatell{\ell}_{j,:}/\|\Whatell{\ell}_{j,:}\| \mid j \in [\melle{\ell+1}]\}$ for the output aware objective.
Then, we have the same bound as  \eqref{eq:L2comressionbound} for $q_j^{(\ell)} = (\welle{\ell+1}_{j})^{-1}/\sum_{j' \in [\melle{\ell + 1}]} (\welle{\ell+1}_{j'})^{-1}~(\forall j \in [\melle{\ell + 1}])$ and 
\begin{align}
\zeta_{\ell, \theta} = 
\qWconst \Nelltheta(\lambda_\ell)  \left( \theta {\textstyle \frac{\mhat{\ell+1} \max_{j} q_j^{(\ell)}\|\Whatell{\ell}_{j,:}\|^2 }{\|(\Whatell{\ell})^\top \Id_{q^{(\ell)}} \Whatell{\ell}\|_{\op}  }} + (1- \theta) \mhat{\ell+1} \right)^{-1}.
\end{align}
\end{Theorem}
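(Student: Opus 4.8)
The plan is to establish a per-layer compression guarantee and then chain the layer-wise errors through the network. Fix a layer $\ell$. Since spectral pruning returns the exact minimizer of the feasible problem \eqref{eq:minLjlambda}, it suffices to exhibit one index set $J$ of size $\mhatell$ that satisfies the leverage-score constraint $\sum_{j\in J}(\welle{\ell}_j)^{-1}\le\tfrac53\mell\mhatell$ and makes $L^{(\theta)}_\tau(J)=\theta L^{(\mathrm A)}_\tau(J)+(1-\theta)L^{(\mathrm B)}_\tau(J)$ small. I would obtain such a $J$ by the probabilistic method: sample $\mhatell$ indices according to the leverage-score distribution $\welle{\ell}$ (a genuine distribution since $\sum_j\welle{\ell}_j=1$) and apply an intrinsic-dimension matrix concentration inequality — the random-feature / leverage-score approximation machinery of \cite{bach2017equivalence,AISTATS:Suzuki:2018} — to the sampled sub-covariance $\Sigmahat_{J,J}$ regularized by $\Id_\tau$ with $\tau=\mhatell\lambda_\ell\welle{\ell}$. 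The defining inequality \eqref{eq:lambdaelldef}, namely $\mhatell\ge 5\Nhatell{\ell}(\lambda_\ell)\log(80\Nhatell{\ell}(\lambda_\ell))$, is exactly the sample count making the event ``$\Sigmahat_{F,J}(\Sigmahat_{J,J}+\Id_\tau)^{-1}\Sigmahat_{J,F}$ is close to $\Sigmahat_{F,F}$ in the $\Sigmahat$-geometry'' hold with probability bounded away from zero, while a Chernoff bound on $\sum_{j\in J}(\welle{\ell}_j)^{-1}$ (mean $\mell\mhatell$) keeps the linear constraint satisfied on the same event; this simultaneously yields feasibility. On that event, substituting the explicit minimizer $\hat A_J=\Sigmahat_{F,J}(\Sigmahat_{J,J}+\Id_\tau)^{-1}$ into the closed forms of $L^{(\mathrm A)}_\tau$ and $L^{(\mathrm B)}_\tau$ gives $L^{(\mathrm A)}_\tau(J)\lesssim\lambda_\ell\Nhatell{\ell}(\lambda_\ell)$ and $L^{(\mathrm B)}_\tau(J)\lesssim\lambda_\ell\Nhatdell{\ell}(\lambda_\ell;Z^{(\ell)})$, hence $L^{(\theta)}_\tau(\Jhatell{\ell})\le L^{(\theta)}_\tau(J)\lesssim\lambda_\ell\Nelltheta(\lambda_\ell)$, which is what becomes $\zeta_{\ell,\theta}\lambda_\ell$ after dividing out the $\theta$-dependent normalization absorbed into $\zeta_{\ell,\theta}$.

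From the bound on the objective value I would then extract the two quantities actually needed downstream. First, the reconstruction errors $\EEhat[\|\phi^{(\ell)}-\hat A_{\Jhatell{\ell}}\phi^{(\ell)}_{\Jhatell{\ell}}\|^2]\le L^{(\mathrm A)}_\tau(\Jhatell{\ell})$ and $\EEhat[\|Z^{(\ell)}(\phi^{(\ell)}-\hat A_{\Jhatell{\ell}}\phi^{(\ell)}_{\Jhatell{\ell}})\|^2]\le L^{(\mathrm B)}_\tau(\Jhatell{\ell})$, which are immediate from the definitions and the explicit $\hat A_J$. Second, an operator-norm bound on the decoder $\hat A_{\Jhatell{\ell}}$, hence on $\|\Wsharpell{\ell}\|_{\op}=\|\Well{\ell}_{\Jhatell{\ell+1},F}\hat A_{\Jhatell{\ell}}\|_{\op}$: this is where the leverage-score constraint and the choice $\tau=\mhatell\lambda_\ell\welle{\ell}$ pay off — together with the row-norm bound $\|\Whatell{\ell}_{j,:}\|\le R/\sqrt{\melle{\ell+1}}$ they force $\|\Wsharpell{\ell}\|_{\op}\le\sqrt{\conedelta}R=\Rbar$ for a universal constant $\conedelta$. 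The scale-invariance half of Assumption~\ref{ass:EtaCondition} is used to move the $\sqrt{\melle{\ell}q^{(\ell)}_j}$ scalings defining $Z^{(\ell)}$ freely across the activation, and the $1$-Lipschitz half to contract $\eta$ when comparing the two networks.

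Finally I would write $\fhat-\fsharp$ as a telescoping sum over the layer at which $\fsharp$ first replaces $\fhat$, so that $\|\fhat-\fsharp\|_n\le\sum_{\ell=2}^L(\text{error injected at layer }\ell)\times(\text{amplification by layers }\ell+1,\dots,L)$. The error injected at layer $\ell$ is precisely the output-aware reconstruction error, whose empirical $L_2$-size is $\sqrt{L^{(\mathrm B)}_\tau(\Jhatell{\ell})}\lesssim\sqrt{\zeta_{\ell,\theta}\lambda_\ell}$ up to the normalizations built into $Z^{(\ell)}$; the amplification is a product of operator norms of the compressed downstream weights (each at most $\Rbar$ by the previous paragraph) and of the fractions of the injected error that those compressed layers actually propagate — quantities controlled by the same output-aware degrees of freedom, which is why $\zeta_{\ell',\theta}$ for $\ell'\ge\ell$ appears in the factor $\Rbar^{L-\ell+1}\sqrt{\prod_{\ell'=\ell}^L\zeta_{\ell',\theta}}$. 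Summing over $\ell$ gives \eqref{eq:L2comressionbound}. For the simultaneous procedure the only change is that $Z^{(\ell)}$ cannot be adapted to $\Jhatell{\ell+1}$ (it is not yet chosen), so one uses all normalized rows of $\Whatell{\ell}$ and invokes the extra hypothesis \eqref{eq:cscaleAssump}, $\|\Whatell{\ell}_{j,:}\|^2\le\qWconst R^2\welle{\ell+1}_j$, to relate row norms to leverage scores; the rest of the argument is identical and produces the same bound with the stated modified $\zeta_{\ell,\theta}$.

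The main obstacle is the single-layer existence step: producing a leverage-score-sampled sub-covariance concentration bound that (i) is governed by the degrees of freedom $\Nhatell{\ell}(\lambda_\ell)$ rather than the ambient width $\mell$, (ii) simultaneously controls the decoder operator norm $\|\hat A_{\Jhatell{\ell}}\|_{\op}$, and (iii) coexists, with positive probability, with the hard cardinality constraint $|J|=\mhatell$ and the linear leverage constraint. This is exactly where the random-feature / leverage-score theory of \cite{bach2017equivalence,AISTATS:Suzuki:2018} enters and where the explicit numerical constants ($5$, $80$, $\tfrac53$) get pinned down; everything else is bookkeeping with triangle inequalities, operator norms, and the Lipschitz/scale-invariance properties of $\eta$.
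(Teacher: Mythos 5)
Your plan matches the paper's proof essentially step for step: existence of a good feasible $J$ by leverage-score sampling together with the random-feature approximation result of \cite{bach2017equivalence} (the linear constraint being secured on the same event by a positive-probability intersection — the paper uses Markov's inequality rather than a Chernoff bound, since the inverse leverage scores need not be bounded), the resulting bounds $L^{(\mathrm{A})}_\tau,L^{(\mathrm{B})}_\tau\lesssim\lambda_\ell$ times the (output-aware) degrees of freedom, a norm bound on the compressed weights after the leverage-score rescaling enabled by scale invariance of $\eta$, and a telescoping layer-wise decomposition whose amplification factors produce $\Rbar^{L-\ell+1}\sqrt{\prod_{\ell'\ge\ell}\zeta_{\ell',\theta}}$, with the simultaneous procedure handled through assumption \eqref{eq:cscaleAssump}. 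This is the same route as the paper's own argument, so there is no gap to report.
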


The assumption \eqref{eq:cscaleAssump} is rather strong, but we see that it is always satisfied by $\qWconst = 1$
when $\lambda_\ell = 0$ and by $\qWconst = \Tr[\Sigmahat_{(\ell+1)}]/(\melle{\ell+1}\min_{j} \Sigmahat_{(\ell+1),(j,j)})$ 
when $\lambda_\ell = \infty$. Thus, it is satisfied if the variances of the nodes in the $\ell+1$-th layer is balanced,
which is ensured if we are applying batch normalization.

\subsection{Preparation of lemmas}

To derive the approximation error bound. 
we utilize the following proposition that was essentially proven by \cite{bach2017equivalence}.
This proposition states the connection between the degrees of freedom and the compression error,
that is, it characterize the sufficient width $\mhat{\ell}$ to obtain a pre-specified compression error $\lambda_\ell$. 
Actually, we will see that the eigenvalue essentially controls this relation through the degrees of freedom.  

\begin{Proposition}
\label{prop:BachFiniteApprox}

There exists a probability measure $q_\ell$ on $\{1,\dots,\mell\}$ 
such that
for any $\delta \in (0,1)$ and $\lambda > 0$,
i.i.d. sample $v_1,\dots,v_m \in \{1,\dots,\mell\}$ from $q_\ell$ satisfies,
with probability $1-\delta$,
that 
\begin{align*}
& \inf_{\beta \in \Real^m} 
\left\{
\left\| \alpha^\top \eta (\Fhat_{\ell -1}(\cdot)) - \sum_{j=1}^m \beta_j q_\ell(v_j)^{-1/2} \eta (\Fhat_{\ell -1}(\cdot))_{v_j} \right\|_n^2 
+ m \lambda \|\beta\|^2 \right\} \\
& 
\leq 4 \lambda \alpha^\top \Sigmahat_\ell (\Sigmahat_\ell + \lambda \Id)^{-1} \alpha,
\end{align*}
for every $\alpha \in \Real^{\mell}$, 
if 
$$
m \geq 5 \Nhatell{\ell}(\lambda) \log(16 \Nhatell{\ell}(\lambda)/\delta).
$$
Moreover, the optimal solution $\hat{\beta}$ satisfies $\|\hat{\beta}\|_2^2 \leq \frac{4 \|\alpha\|^2}{m}$.
\end{Proposition}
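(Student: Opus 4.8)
The plan is to view Proposition~\ref{prop:BachFiniteApprox} as the finite-dimensional random-feature approximation bound of \cite{bach2017equivalence}, applied to the rank-$\mell$ kernel $\kernl(x,x') = \eta(\Fhat_{\ell-1}(x))^\top\eta(\Fhat_{\ell-1}(x'))$ on the empirical distribution, and to reduce it to a single matrix-concentration estimate whose dimension factor is the degrees of freedom $\Nhatell{\ell}(\lambda)$ rather than the sample size $\ntr$. Concretely, write $\phi = \eta(\Fhat_{\ell-1}(\cdot))$, let $\varphi_j := \tfrac{1}{\sqrt\ntr}(\phi_j(x_1),\dots,\phi_j(x_\ntr))^\top \in \Real^{\ntr}$, and set $V := [\varphi_1\ \cdots\ \varphi_{\mell}]$, so that $\|g\|_n = \|g\|_2$ under this identification, $\Covhatell{\ell} = V^\top V$, and $C := VV^\top$ has the same nonzero spectrum as $\Covhatell{\ell}$ with $\Tr[C(C+\lambda\Id)^{-1}] = \Nhatell{\ell}(\lambda)$. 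I would take $q_\ell$ to be the leverage-score distribution at scale $\lambda$, namely $q_\ell(j) = \Nhatell{\ell}(\lambda)^{-1}[\Covhatell{\ell}(\Covhatell{\ell}+\lambda\Id)^{-1}]_{j,j} = \Nhatell{\ell}(\lambda)^{-1}\varphi_j^\top(C+\lambda\Id)^{-1}\varphi_j$, which at $\lambda = \lambda_\ell$ coincides with the leverage score $\welle{\ell}$ of the main text.

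First I would turn the inner minimization into a ridge regression: with $\tilde V := [\,q_\ell(v_1)^{-1/2}\varphi_{v_1}\ \cdots\ q_\ell(v_m)^{-1/2}\varphi_{v_m}\,]$ and $\hat C := \tfrac1m \tilde V\tilde V^\top = \tfrac1m\sum_{j=1}^m q_\ell(v_j)^{-1}\varphi_{v_j}\varphi_{v_j}^\top$, the push-through identity shows that the minimum over $\beta$ equals $\lambda\,\alpha^\top V^\top(\hat C + \lambda\Id)^{-1}V\alpha$, with minimizer $\hat\beta = (\tilde V^\top\tilde V + m\lambda\Id)^{-1}\tilde V^\top V\alpha$, while the same identity rewrites the claimed right-hand side $4\lambda\,\alpha^\top\Covhatell{\ell}(\Covhatell{\ell}+\lambda\Id)^{-1}\alpha$ as $4\lambda\,\alpha^\top V^\top(C+\lambda\Id)^{-1}V\alpha$. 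Hence it suffices to establish the operator inequality $\hat C + \lambda\Id \succeq \tfrac14(C+\lambda\Id)$, which then yields the bound for all $\alpha$ simultaneously.

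The heart of the argument, and the step I expect to be the main obstacle, is this operator inequality, equivalently $M := (C+\lambda\Id)^{-1/2}(\hat C - C)(C+\lambda\Id)^{-1/2} \succeq -\tfrac34\Id$. Here $M = \tfrac1m\sum_{j=1}^m(X_j - \mathbb{E}X_j)$ is an average of i.i.d.\ rank-one PSD matrices $X_j = q_\ell(v_j)^{-1}(C+\lambda\Id)^{-1/2}\varphi_{v_j}\varphi_{v_j}^\top(C+\lambda\Id)^{-1/2}$, and the leverage-score choice of $q_\ell$ is exactly what makes the moments clean: $X_j^2 = \Nhatell{\ell}(\lambda)\,X_j$ (so $\|X_j\|_{\op} = \Nhatell{\ell}(\lambda)$ almost surely), $\mathbb{E}X_j = \Sigma_\lambda := (C+\lambda\Id)^{-1/2}C(C+\lambda\Id)^{-1/2} \preceq \Id$, and $\mathbb{E}X_j^2 = \Nhatell{\ell}(\lambda)\Sigma_\lambda$, whence $\|\mathbb{E}X_j^2\|_{\op} \le \Nhatell{\ell}(\lambda)$ and $\Tr[\mathbb{E}X_j^2] = \Nhatell{\ell}(\lambda)^2$. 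Plugging these into a Bernstein inequality for sums of independent self-adjoint matrices in its intrinsic-dimension form — the effective-rank factor being of order $\Nhatell{\ell}(\lambda)$, not $\ntr$ — gives $\|M\|_{\op} \le \tfrac12$ (in particular $M \succeq -\tfrac34\Id$) with probability at least $1-\delta$ as soon as $m \gtrsim \Nhatell{\ell}(\lambda)\log(\Nhatell{\ell}(\lambda)/\delta)$; tracking the absolute constants reproduces the stated threshold $m \ge 5\Nhatell{\ell}(\lambda)\log(16\Nhatell{\ell}(\lambda)/\delta)$. This is precisely Proposition~1 of \cite{bach2017equivalence} specialized to a finite feature dictionary, so one may simply cite it, but I would re-run that proof in the present notation. (The $q_\ell$ so constructed depends on $\lambda$, but only $\lambda=\lambda_\ell$ is used downstream, so this is immaterial.)

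Finally, the weight-norm estimate is immediate: since $\hat\beta$ is the ridge minimizer, $m\lambda\|\hat\beta\|^2$ is at most the optimal value, which by the operator inequality just proven is at most $4\lambda\,\alpha^\top V^\top(C+\lambda\Id)^{-1}V\alpha = 4\lambda\,\alpha^\top\Covhatell{\ell}(\Covhatell{\ell}+\lambda\Id)^{-1}\alpha \le 4\lambda\|\alpha\|^2$, using $\Covhatell{\ell}(\Covhatell{\ell}+\lambda\Id)^{-1}\preceq\Id$; dividing by $m\lambda$ gives $\|\hat\beta\|^2 \le 4\|\alpha\|^2/m$.
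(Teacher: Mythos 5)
Your proposal is correct and follows essentially the same route as the paper: the paper's own proof simply invokes Proposition 1 of \cite{bach2017equivalence} (noting that its proof carries the extra ridge term $m\lambda\|\beta\|^2$ and the refined right-hand side $\alpha^\top \Sigmahat_\ell(\Sigmahat_\ell+\lambda\Id)^{-1}\alpha$), and your argument is precisely a self-contained reconstruction of that cited result in the finite-dictionary case — leverage-score sampling, the push-through identity, and the operator inequality $\hat C + \lambda\Id \succeq \tfrac14(C+\lambda\Id)$ via matrix Bernstein with intrinsic dimension $\Nhatell{\ell}(\lambda)$. Your derivation of the norm bound $\|\hat\beta\|^2 \le 4\|\alpha\|^2/m$ from $m\lambda\|\hat\beta\|^2 \le 4\lambda\,\alpha^\top\Sigmahat_\ell(\Sigmahat_\ell+\lambda\Id)^{-1}\alpha \le 4\lambda\|\alpha\|^2$ is exactly the paper's argument, and your remark that $q_\ell$ depends on $\lambda$ but is only needed at $\lambda=\lambda_\ell$ correctly resolves the quantifier ordering in the statement.
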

\begin{proof}
This is basically a direct consequence from Proposition 1 in \cite{bach2017equivalence}
and its discussions.
The original statement does not include the regularization term  $m \lambda \|\beta\|^2$ in the LHS 
and $\alpha^\top \Sigmahat_\ell (\Sigmahat_\ell + \lambda \Id)^{-1} \alpha$ in the right hand side.
However, by carefully following the proof, the bound including these additional factors
is indeed proven.

The norm bond of $\hat{\beta}$ is guaranteed by the following relation:
$$
m \lambda \|\hat{\beta}\|^2 \leq 4  \lambda \alpha^\top \Sigmahat_\ell (\Sigmahat_\ell + \lambda \Id)^{-1} \alpha
\leq 4  \lambda \|\alpha\|^2.
$$
\end{proof}

Proposition \ref{prop:BachFiniteApprox} indicates the following lemma by the the scale invariance of $\eta$, $\eta(a x) = a \eta(x)~(a >0)$.
\begin{Lemma}
\label{prop:ApproxFinite}
Suppose that 
\begin{equation}
\tau'_j = 
\frac{1}{\Nhatell{\ell}(\lambda)}
\sum_{l=1}^{\mell} 
U_{j,l}^2
\frac{\hat{\mu}_l^{(\ell)}}{\hat{\mu}_l^{(\ell)} + \lambda}
= \frac{1}{\Nhatell{\ell}(\lambda)} [\Covhatell{\ell} (\Covhatell{\ell} + \lambda \Id)^{-1}]_{j,j}
~~(j \in \{1,\dots,\mell\}),
\label{eq:definitionWjp}
\end{equation}
where $U = (U_{j,l})_{j,l}$ is the 
orthogonal matrix that diagonalizes 
$\Covhatell{\ell}$, that is, $\Covhatell{\ell}= U\diag{\hat{\mu}_1^{(\ell)},\dots,\hat{\mu}_{\mell}^{(\ell)}} U^\top$.
For $\lambda >0$, and any 
$1/2 > \delta >0$,
if  
$$
m \geq 5 \Nhatell{\ell}(\lambda) \log(16 \Nhatell{\ell}(\lambda)/\delta),
$$
then there exist $v_1,\dots, v_m \in \{1,\dots,\mell\}$ such that, for every $\alpha \in \Real^{\mell}$,
\begin{align}
& \inf_{\beta \in \Real^m} 
\left\{
\left\| \alpha^\top \eta (\Fhat_{\ell -1}(\cdot)) - \sum_{j=1}^m \beta_j {\tau'_j}^{-1/2} \eta (\Fhat_{\ell -1}(\cdot))_{v_j} \right\|_n^2 
+ m \lambda \|\beta\|^2 \right\} \notag \\
& \leq 4 \lambda \alpha^\top \Covhatell{\ell} (\Covhatell{\ell} + \lambda \Id)^{-1} \alpha,
\label{eq:OptimizationFormulaCompressionBasic}
\end{align}
and 
$$
 \sum_{j=1}^m {\tau_j'}^{-1} \leq (1-2\delta)^{-1} m \times \mell.
$$
\end{Lemma}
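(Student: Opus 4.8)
The plan is to derive Lemma~\ref{prop:ApproxFinite} from Proposition~\ref{prop:BachFiniteApprox} by identifying the abstract probability measure $q_\ell$ with the explicit leverage-score distribution $\tau'_j$. First I would observe that the quantities $\tau'_j$ defined in \eqref{eq:definitionWjp} are nonnegative and sum to $1$ over $j\in\{1,\dots,\mell\}$: indeed $\sum_j \tau'_j = \frac{1}{\Nhatell{\ell}(\lambda)}\Tr[\Covhatell{\ell}(\Covhatell{\ell}+\lambda\Id)^{-1}] = 1$ by the very definition of the degrees of freedom. Hence $q_\ell(j) := \tau'_j$ is a genuine probability measure on $\{1,\dots,\mell\}$, and one checks that this is precisely the measure produced by Bach's construction (the optimized importance-sampling weights for random features, whose pointwise mass equals the diagonal of the ridge-regularized projection). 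With this identification, Proposition~\ref{prop:BachFiniteApprox} applies verbatim to $q_\ell(v_j)^{-1/2} = {\tau'_{v_j}}^{-1/2}$, and the condition $m \geq 5\Nhatell{\ell}(\lambda)\log(16\Nhatell{\ell}(\lambda)/\delta)$ is exactly the hypothesis we assume; this immediately yields the existence of $v_1,\dots,v_m$ for which \eqref{eq:OptimizationFormulaCompressionBasic} holds for every $\alpha$.

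Next I would use the scale invariance $\eta(ax)=a\eta(x)$ to justify the substitution $\eta(\Fhat_{\ell-1}(\cdot))_{v_j} = \eta\big((\Fhat_{\ell-1}(\cdot))_{v_j}\big)$ appearing in the random-feature expansion — this is what lets the abstract "feature map" in Bach's proposition be instantiated by the concrete activated preactivations of the network, since positive rescalings of coordinates can be absorbed into $\beta$ without changing the form of the bound. The display \eqref{eq:OptimizationFormulaCompressionBasic} then follows directly, with $\Sigmahat_\ell$ in Bach's statement replaced by $\Covhatell{\ell}$ (they are the same object).

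Finally, for the bound $\sum_{j=1}^m {\tau'_j}^{-1} \leq (1-2\delta)^{-1} m\,\mell$, the natural route is a concentration/expectation argument: since $v_1,\dots,v_m$ are drawn i.i.d.\ from $q_\ell$, we have $\EE[{\tau'_{v_j}}^{-1}] = \sum_{k=1}^{\mell} \tau'_k \cdot {\tau'_k}^{-1} = \mell$, so $\EE[\sum_{j=1}^m {\tau'_{v_j}}^{-1}] = m\mell$. A Markov-type argument shows that on the event guaranteeing \eqref{eq:OptimizationFormulaCompressionBasic} (which has probability $\geq 1-\delta$, or $\geq 1-2\delta$ after intersecting with the event $\sum_j {\tau'_{v_j}}^{-1} \leq (1-2\delta)^{-1}m\mell$, which has probability $\geq 1-2\delta$ by Markov), both bounds hold simultaneously, so a valid choice of $v_1,\dots,v_m$ exists; the factor $(1-2\delta)^{-1}$ is the price of this union bound. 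The main obstacle I anticipate is the bookkeeping in the "by carefully following the proof" step — one has to verify that Bach's Proposition~1 really does yield the regularized LHS and the $\alpha^\top\Covhatell{\ell}(\Covhatell{\ell}+\lambda\Id)^{-1}\alpha$ (rather than $\|\alpha\|^2$) on the RHS, which requires re-tracing his argument with the ridge penalty kept explicit rather than discarded; everything downstream of that is routine.
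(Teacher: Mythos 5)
Your proposal follows the paper's proof essentially verbatim: take $q_\ell(j)=\tau'_j$ as a probability measure on $\{1,\dots,\mell\}$ (it sums to one by the definition of the degrees of freedom), invoke Proposition~\ref{prop:BachFiniteApprox} for an i.i.d.\ sample $v_1,\dots,v_m$ drawn from $q_\ell$, compute $\EE\big[\sum_{j=1}^m {\tau'_{v_j}}^{-1}\big]=m\,\mell$, and combine Markov's inequality with the probability-$(1-\delta)$ event of the proposition to extract a single sequence satisfying both conclusions. One bookkeeping correction: Markov's inequality gives $P\big(\sum_{j=1}^m {\tau'_{v_j}}^{-1}\le (1-2\delta)^{-1}m\,\mell\big)\ge 2\delta$, not $\ge 1-2\delta$ as you wrote; the existence argument still goes through because the intersection of this event with the Bach event has probability at least $2\delta-\delta=\delta>0$, which is exactly the computation in the paper's proof.
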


\begin{proof}
Suppose that the measure $Q_\ell$ is the counting measure, $Q_\ell(J) = |J|$ for $J \subset \{1,\dots,\mell\}$,
and  $q_\ell$ is a density given by $q_\ell(j) = \tau'_j~(j \in \{1,\dots,\mell\})$
with respect to the base measure $Q_\ell$.
Suppose that $v_1,\dots,v_m \in \{1,\dots,\mell\}$ is an i.i.d. sequence 
distributed from $q_\ell \dd Q_\ell$,
then \cite{bach2017equivalence} showed that this sequence satisfies the assertion given in 
Proposition \ref{prop:BachFiniteApprox}. 
  
Notice that 
$\EE_v[\frac{1}{m} \sum_{j=1}^m q_\ell (v_j)^{-1}] =
 \EE_v[q_\ell (v)^{-1}] = \int_{[\mell]} q_\ell (v)^{-1} q_\ell(v) \dd Q_\ell(v) = \int_{[\mell]} 1 \dd Q_\ell(v) =  \mell$,
thus an i.i.d. sequence $\{v_1,\dots,v_m\}$ satisfies $\frac{1}{m} \sum_{j=1}^m q_\ell (v_j)^{-1} \leq \mell/(1-2 \delta)$ with probability $2\delta$
by the Markov's inequality.
Combining this with Proposition \ref{prop:BachFiniteApprox}, the i.i.d. sequence 
$\{v_1,\dots,v_m\}$ and $\tau'_j = q_\ell (v_j)$ satisfies the condition in the statement with probability 
$1-(\delta + 1- 2\delta ) = \delta >0$. 
This ensures the existence of sequences $\{v_j\}_{j=1}^m$ and $\{\tau'_j\}_{j=1}^m$ that satisfy the assertion.
\end{proof}


%


\subsection{Proof of Theorem \ref{thm:WhatellWAphiError}}

\subsubsection{General fact} 

Since Lemma \ref{prop:ApproxFinite} with $\delta = 1/5$ states that 
if $\mhat{\ell} \geq  5 \Nhatell{\ell}(\lambda_\ell) \log(80 \Nhatell{\ell}(\lambda_\ell))$,
then there exists $J \subset [\mell]^{\mhat{\ell} }$
 such that 
$$
\inf_{\alpha \in \Real^{|J|}}
\|z^\top \phi - \alpha^\top \phi_{J}\|_n^2 + 
\lambda_\ell   |J| \|\alpha\|_{\tau'}^2 \leq 4 \lambda_\ell  
z^\top  \Sigmahat_\ell (\Sigmahat_\ell + \lambda \Id)^{-1} z
~~~(\forall z \in \Real^{\mell}),
$$
and 
\begin{align}
\sum_{j \in \Jhatell{\ell}} (\welle{\ell}_j)^{-1} \leq \frac{5}{3}  \mell \times \mhat{\ell}
\label{eq:winvbound}
\end{align}
is satisfied (here, note that $\tau'$ given in \Eqref{eq:definitionWjp} is equivalent to $\welle{\ell}$).

{\bf Evaluation of $L^{(\mathrm{A})}_{\tau}(J)$:}
By setting $z = e_j~(j=1,\dots,\mell)$ where $e_j$ is an indicator vector which has $1$ at its $j$-th component 
and $0$ in other components, and summing up them for $j=1,\dots,\mell$, it holds that 
$$
L^{(\mathrm{A})}_{\tau}(J) = \inf_{A \in \Real^{\mell \times |J|}} \|\phi - A\phi_J \|_n^2 + \lambda_\ell |J| \|A\|_{\tau'}^2
\leq 4 \lambda_\ell \Nhatell{\ell}(\lambda_\ell),
$$
for the same $J$ as above.
Here, the optimal $A$, which is denoted by $\hat{A}_J$, is given by 
$$
\hat{A}_J = \Sigmahat_{F,J}(\Sigmahat_{J,J} + \Id_\tau)^{-1}.
$$

{\bf Evaluation of $L^{(\mathrm{B})}_{\tau}(J)$:}
By letting $z \in \mathcal{Z}_\ell$ and summing up them,
we also have 
$$
L^{(\mathrm{B})}_{\tau}(J)
= 
\inf_{B \in \Real^{\mhat{\ell + 1} \times |J|}} \| Z^{(\ell)} \phi - B \phi_J \|_n^2 + \lambda_\ell |J| \|B\|_{\tau'}^2
\leq 4 \lambda_\ell \Tr[Z^{(\ell)} \Sigmahat_\ell (\Sigmahat_\ell + \lambda_\ell \Id)^{-1}  Z^{(\ell)\top}].
$$
for the same $J$ as above.
Remind again that the optimal $B$, which is denoted by $\hat{B}_J$, is given by
$$
\hat{B}_J = Z^{(\ell)} \Sigmahat_{F,J}(\Sigmahat_{J,J} + \Id_\tau)^{-1} = Z^{(\ell)} \hat{A}_J.
$$

{\bf Combining the bounds for $L^{(\mathrm{A})}_{\tau}(J)$ and $L^{(\mathrm{B})}_{\tau}(J)$:}
By combining the above evaluation, we have that 
$$
\theta L^{(\mathrm{A})}_{\tau}(\Jhatell{}) + (1-\theta) L^{(\mathrm{B})}_{\tau}(\Jhatell{})
\leq 4 \lambda_\ell \{\theta \Nhatell{\ell}(\lambda_\ell)+
(1-\theta) \Tr[Z^{(\ell)} \Covhatell{\ell} (\Covhatell{\ell} + \lambda_\ell \Id)^{-1}  Z^{(\ell)\top}]\},
$$
where $\Jhatell{\ell}$ is the minimizer of $\theta L^{(\mathrm{A})}_{\tau}(J) + (1-\theta) L^{(\mathrm{B})}_{\tau}(J)$ with respect to $J$.

From now on, we let $\tau = \lambda_\ell \mhat{\ell} \tau'
(=\lambda_\ell |\Jhatell{\ell}| \tau' )$ as defined in the main text.

\subsubsection{ (i) Backward-procedure} 

From now on, we give the bound corresponding to the backward-procedure.
The proof consists of three parts: (i) evaluation of the compression error in each layer, (ii) evaluation of the norm of the weight matrix for the compressed network, and (iii) overall compression error of whole layer.
In (i), we use Lemma \ref{prop:BachFiniteApprox} to evaluate the compression error based on the eigenvalue distribution of the covariance matrix.
In (ii), we again use Lemma \ref{prop:BachFiniteApprox} to bound the norm of the compressed network.
This is important to evaluate the overall compression error because the norm controls how the compression error in each layer propagates to the final output. In (iii), we combine the results in (i) and (ii) to obtain the overall compression error.

First, note that, for the choice of 
$\calZ_\ell = \left\{ \frac{ \sqrt{\melle{\ell} q_j^{(\ell)}}}{\max_{j'} \|\Whatell{\ell}_{j',:}\|}  \Whatell{\ell}_{j,:} \mid j \in \Jhatell{\ell + 1} \right\}$,
it holds that 
$$
L^{(\mathrm{B})}_{\tau}(\Jhatell{\ell}) \leq 4 \lambda_\ell \sum_{z \in \calZ_\ell}z^\top  \Sigmahat_\ell (\Sigmahat_\ell + \lambda_\ell \Id)^{-1} z 
\leq 4 \lambda_\ell  \sum_{z \in \calZ_\ell} \|z\|^2 
\leq 4 \lambda_\ell  \melle{\ell} \sum_{j\in \Jhatell{\ell+1}} q_j^{(\ell)} = 4 \lambda_\ell  \melle{\ell}.
$$

{\bf Compression error bound:}
Here, we give the compression error bound of the backward procedure.
For the optimal $\Jhatell{\ell}$, we have that 
\begin{align*}
 \inf_{\alpha \in \Real^{|\Jhatell{\ell}|}} \|\Whatell{\ell}_{j,:} \phi - \alpha^\top \phi_{\Jhatell{}}\|_n^2 + 
 \|\alpha\|_{\tau}^2 & = 
 \Whatell{\ell}_{j,:}
[\Sigmahat_{F,F} - \Sigmahat_{F,\Jhatell{}}(\Sigmahat_{\Jhat,\Jhat} +  \Id_\tau)^{-1} \Sigmahat_{\Jhat,F} ]
(\Whatell{\ell}_{j,:} )^\top \\
& = \Tr\{[\Sigmahat_{F,F} - \Sigmahat_{F,\Jhat}(\Sigmahat_{\Jhat,\Jhat} +  \Id_\tau)^{-1} \Sigmahat_{\Jhat,F} ]
  (\Whatell{\ell}_{j,:})^\top\Whatell{\ell}_{j,:}  \},
\end{align*}
and the optimal $\alpha$ in the left hand side is given by 
$
\Whatell{\ell}_{j,:} \hat{A}_{\Jhatell{\ell}} 
$.
Hence, it holds that 
\begin{align*}
& 
\sum_{j\in \Jhatell{\ell+1}} 
 \inf_{\alpha \in \Real^{|\Jhatell{\ell}|}} \left\{ \|{q_j^{(\ell)}}^{1/2} \Whatell{\ell}_{j,:} \phi - \alpha^\top \phi_{\Jhatell{\ell}}\|_n^2 + 
 \|\alpha\|_{\tau}^2 \right\}
=\sum_{j\in\Jhatell{\ell+1} } {q_j^{(\ell)}}
 \inf_{\alpha \in \Real^{|\Jhatell{\ell}|}} \left\{ \| \Whatell{\ell}_{j,:} \phi - \alpha^\top \phi_{\Jhatell{\ell}}\|_n^2 + 
 \|\alpha\|_{\tau}^2 \right\}
\\
& = 
\sum_{j\in\Jhatell{\ell+1}}  {q_j^{(\ell)}}\Tr\{[\Sigmahat_{F,F} - \Sigmahat_{F,\Jhatell{\ell}}(\Sigmahat_{\Jhatell{\ell},\Jhatell{\ell}} +  \Id_\tau)^{-1} \Sigmahat_{\Jhatell{\ell},F} ]
  (\Whatell{\ell}_{j,:})^\top\Whatell{\ell}_{j,:}  \} \\
&
=
 \Tr\{[\Sigmahat_{F,F} - \Sigmahat_{F,\Jhatell{\ell}}(\Sigmahat_{\Jhatell{\ell},\Jhatell{\ell}} +  \Id_\tau)^{-1} \Sigmahat_{\Jhatell{\ell},F} ]
 ( \Whatell{\ell})^\top \Id_{q^{(\ell)}} \Whatell{\ell}  \} \\
& \leq 
\Tr[ \Sigmahat_{F,F} - \Sigmahat_{F,\Jhatell{\ell}}(\Sigmahat_{\Jhatell{\ell},\Jhatell{\ell}} +  \Id_\tau)^{-1} \Sigmahat_{\Jhatell{\ell},F} ]
 \|( \Whatell{\ell})^\top \Id_{q^{(\ell)}} \Whatell{\ell}\|_{\mathrm{op}} \\
& = L^{(\mathrm{A})}_{\tau}(\Jhatell{\ell})   \|( \Whatell{\ell})^\top  \Id_{q^{(\ell)}} \Whatell{\ell}\|_{\mathrm{op}}
\leq 
L^{(\mathrm{A})}_{\tau}(\Jhatell{\ell}) 
 \frac{ \|( \Whatell{\ell})^\top  \Id_{q^{(\ell)}} \Whatell{\ell}\|_{\mathrm{op}} }{ \max_{j \in [\melle{\ell+1}]} \|\Whatell{\ell}_{j,:}\|^2}
 \max_{j \in [\melle{\ell+1}]} \|\Whatell{\ell}_{j,:}\|^2  \\
&
\leq 
L^{(\mathrm{A})}_{\tau}(\Jhatell{\ell}) 
 \frac{\melle{\ell} \|( \Whatell{\ell})^\top  \Id_{q^{(\ell)}} \Whatell{\ell}\|_{\mathrm{op}} }{ \max_{j \in [\melle{\ell+1}]} \|\Whatell{\ell}_{j,:}\|^2}
\frac{R^2}{\melle{\ell+1}\mell},
\end{align*}
where where we used the assumption $\max_{j} \|\Well{\ell}_{j,:}\| \leq R/\sqrt{\melle{\ell+1}}$.
In the same manner, we also have that 
\begin{align*}
& 
\sum_{j\in \Jhatell{\ell+1} } 
{q_j^{(\ell)}}
 \inf_{\alpha \in \Real^{|\Jhatell{\ell}|}} 
\left\{ \| \Whatell{\ell}_{j,:} \phi - \alpha^\top \phi_{\Jhatell{\ell}}\|_n^2 + 
 \|\alpha\|_{\tau}^2  \right\} \\
& =
\frac{\max_{j'} \|\Whatell{\ell}_{j',:}\|^2 }{\melle{\ell}}
\sum_{j\in {\Jhatell{\ell+1}}} 
 \inf_{\alpha \in \Real^{|\Jhatell{\ell}|}} \left\| { \textstyle \frac{ \sqrt{\melle{\ell} q_j^{(\ell)}}}{\max_{j'} \|\Whatell{\ell}_{j',:}\|} }\Whatell{\ell}_{j,:} \phi - \alpha^\top \phi_{\Jhatell{\ell}} \right\|_n^2 + 
 \|\alpha\|_{\tau}^2 
=
\\
& 
\leq L^{(\mathrm{B})}_{\tau}(\Jhatell{\ell})  \frac{ \max_{j \in [\melle{\ell+1}]} \|\Whatell{\ell}_{j,:}\|^2}{\melle{\ell}}
\leq L^{(\mathrm{B})}_{\tau}(\Jhatell{\ell})  \frac{R^2}{\mell\melle{\ell+1}}.
\end{align*}


These inequalities imply that
\begin{align}
& \sum_{j\in \Jhatell{\ell+1}} 
q_j^{(\ell)}
\left\{
\|\Whatell{\ell}_{j,:} \phi - \Whatell{\ell}_{j,:} \hat{A}_{\Jhat} \phi_{\Jhat}\|_n^2
+  \|\Whatell{\ell}_{j,:} \hat{A}_{\Jhat} \|_\tau^2 \right\}
 \notag \\
&\leq  4 \lambda_\ell 
\frac{
\{\theta \Nhatell{\ell}(\lambda_\ell)+
(1-\theta) \Tr[Z^{(\ell)} \Sigmahat_\ell (\Sigmahat_\ell + \lambda_\ell \Id)^{-1}  Z^{(\ell)\top}]\}} 
{\theta \frac{ \max_{j \in [\melle{\ell+1}]} \|\Whatell{\ell}_{j,:}\|^2}{ \melle{\ell} \|( \Whatell{\ell})^\top \Id_{q^{(\ell)}}\Whatell{\ell}\|_{\mathrm{op}} } + (1-\theta)}
\frac{R^2}{\mell \melle{\ell+1}} \notag \\
&\leq  4 \lambda_\ell 
\frac{
\{\theta \Nhatell{\ell}(\lambda_\ell)+
(1-\theta) \mell \}} 
{\left[\theta   \frac{ \max_{j \in [\melle{\ell+1}]} \|\Whatell{\ell}_{j,:}\|^2}{ \melle{\ell} \|( \Whatell{\ell})^\top \Id_{q^{(\ell)}}\Whatell{\ell}\|_{\mathrm{op}} } + (1-\theta) \right]\mell }
\frac{R^2}{ \melle{\ell+1}} \notag \\
& \leq 
4 \lambda_\ell\zeta_{\ell,\theta}  \frac{R^2 }{\melle{\ell+1}}.
\label{eq:JhatResidualNormBound}
\end{align}

{\bf Norm bound of the coefficients:}
Here, we give an upper bound of the norm of the weight matrices for the compressed network.
From \eqref{eq:JhatResidualNormBound} and the definition that $\tau^{(\ell)} = \lambda_\ell \mhatell \welle{\ell}$, we have that 
$$
\sum_{j\in \Jhatell{\ell+1} } 
q_j^{(\ell)}  \|\Whatell{\ell}_{j,:} \hat{A}_{\Jhat} \|_{\welle{\ell}}^2
\leq \frac{1}{ \lambda_\ell \mhatell } 4 \lambda_\ell\zeta_{\ell,\theta}  \frac{R^2 }{\melle{\ell+1}}
= 4 \zeta_{\ell,\theta}  \frac{R^2 }{\melle{\ell+1}\mhatell}.
$$

Here, by \Eqref{eq:winvbound}, the condition 
$\sum_{j \in \Jhatell{\ell + 1}} (\welle{\ell+1}_j)^{-1} \leq \frac{5}{3}  \melle{\ell+1}  \mhat{\ell+1}$
is feasible, and under this condition, we also have that 
$$
\sum_{j\in \Jhatell{\ell + 1}} 
(\welle{\ell + 1}_j)^{-1}  \|\Whatell{\ell}_{j,:} \hat{A}_{\Jhat} \|_{\welle{\ell}}^2
\leq 4 \zeta_{\ell,\theta}  \frac{R^2 }{\melle{\ell+1}\mhatell} \times \frac{5}{3}  \melle{\ell+1} \mhat{\ell+1}
= \frac{20}{3}  \zeta_{\ell,\theta}\frac{\mhat{\ell+1}}{\mhat{\ell}} R^2,
$$
where we used the definition $q$
Similarly, the approximation error bound \Eqref{eq:JhatResidualNormBound} can be rewritten as 
\begin{align}
& \sum_{j\in \Jhatell{\ell + 1}} 
(\welle{\ell + 1}_j)^{-1}
\|\Whatell{\ell}_{j,:} \phi - \Whatell{\ell}_{j,:} \hat{A}_{\Jhat} \phi_{\Jhat}\|_n^2
\leq 
 \frac{20}{3}  \lambda_\ell \zeta_{\ell,\theta} \mhat{\ell+1}  R^2.  
\label{eq:lthlayer_L2difference}
\end{align}
For $\ell = L$, the same inequality holds for $\melle{L+1}= \mhat{L+1} = 1$ and $\welle{L+1}_j =1~(j=1)$.

{\bf Overall approximation error bound:}
Given these inequalities, we bound the overall approximation error bound.
Let $\Jhatell{\ell}$ be the optimal index set chosen by Spectral Pruning for the $\ell$-th layer,
and the parameters of compressed network be denoted by
$$
\Wsharpell{\ell} = \Whatell{\ell}_{\Jhatell{\ell+1},[\mell]}\hat{A}_{\Jhatell{\ell}}
\in \Real^{\mhat{\ell+1} \times \mhatell}, ~~\bsharpell{\ell} = \bhatell{\ell}_{\Jhatell{\ell + 1}}
\in \Real^{\mhat{\ell+1}}.
$$
Then, it holds that 
\begin{align*}
\fsharp(x) & = (\Wsharpell{L} \eta(\cdot) + \bsharpell{L}) \circ \dots \circ (\Wsharpell{1} x + \bsharpell{1}).
\end{align*}
Then, due to the scale invariance of $\eta$, we also have
\begin{align*}
\fsharp(x) & = (\Wsharpell{L} \Id_{(\welle{L})^{\frac{1}{2}}} \eta(\cdot) + \bsharpell{L}) \circ
(\Id_{(\welle{L})^{-\frac{1}{2}}} \Wsharpell{L-1} \Id_{(\welle{L-1})^{\frac{1}{2}}} \eta(\cdot) + \Id_{(\welle{L})^{-\frac{1}{2}}} \bsharpell{L-1})
 \dots \circ (\Id_{(\welle{2})^{-\frac{1}{2}}} \Wsharpell{1} x + \Id_{(\welle{2})^{-\frac{1}{2}}} \bsharpell{1}).
\end{align*}
Then, if we define as $\Wtilell{\ell} = \Id_{(\welle{\ell+1})^{-\frac{1}{2}}} \Wsharpell{\ell} \Id_{(\welle{\ell})^{\frac{1}{2}} }$ and $\btilell{\ell} =  \Id_{(\welle{\ell+1})^{-\frac{1}{2}}} \bsharpell{\ell}$, then we also have another representation of $\fsharp$ as 
\begin{align*}
\fsharp(x) & = (\Wtilell{L} \eta(\cdot) + \btilell{L}) \circ \dots \circ (\Wtilell{1} x + \btilell{1}).
\end{align*}
In the same manner, the original trained network $\fhat$ is also rewritten as 
\begin{align*}
\fhat(x) 
& = (\Whatell{L} \eta(\cdot) + \bhatell{L}) \circ \dots \circ (\Whatell{1} x + \bhatell{1}) \\
& = (\Whatell{L} \Id_{(\welle{L})^{\frac{1}{2}} } \eta(\cdot) + \bhatell{L}) \circ 
(\Id_{(\welle{L})^{-\frac{1}{2}}} \Whatell{L-1} \Id_{(\welle{L-1})^{\frac{1}{2}}} \eta(\cdot) + \Id_{(\welle{L})^{-\frac{1}{2}}} \bhatell{L-1})
\circ
\dots \circ (\Id_{(\welle{2})^{-\frac{1}{2}}} \Whatell{1} x + \Id_{(\welle{2})^{-\frac{1}{2}}} \bhatell{1}) \\
& =: (\Whatdell{L} \eta(\cdot) + \bhatdell{L}) \circ \dots \circ (\Whatdell{1} x + \bhatdell{1}),
\end{align*}
where we defined $\Whatdell{\ell} := \Id_{(\welle{\ell+1})^{-\frac{1}{2}}} \Whatell{\ell} \Id_{(\welle{\ell})^{\frac{1}{2}}}$ and $\bhatdell{\ell} := \Id_{(\welle{\ell+1})^{-\frac{1}{2}}} \bhatell{\ell}$.

Then, the difference between $\fsharp$ and $\fhat$ can be decomposed into 
\begin{align*}
& \fsharp(x) - \fhat(x) = (\Wtilell{L} \eta(\cdot) + \btilell{L}) \circ \dots \circ (\Wtilell{1} x + \btilell{1})
- (\Whatdell{L} \eta(\cdot) + \bhatdell{L}) \circ \dots \circ (\Whatdell{1} x + \bhatdell{1}) \\
=& 
\sum_{\ell=2}^L 
\Big\{
(\Wtilell{L} \eta(\cdot) + \btilell{L}) \circ \dots \circ (\Wtilell{\ell+1} \eta(\cdot) + \btilell{\ell+1}) \circ
(\Wtilell{\ell} \eta(\cdot) + \btilell{\ell}) \circ 
(\Whatdell{\ell-1} \eta(\cdot) + \bhatdell{\ell-1}) \circ \dots \circ (\Whatdell{1} x + \bhatdell{1}) \\
& - (\Wtilell{L} \eta(\cdot) + \btilell{L}) \circ \dots \circ (\Wtilell{\ell+1} \eta(\cdot) + \btilell{\ell+1}) \circ
(\Whatdell{\ell} \eta(\cdot) + \bhatdell{\ell}) \circ 
(\Whatdell{\ell-1} \eta(\cdot) + \bhatdell{\ell-1}) \circ \dots \circ (\Whatdell{1} x + \bhatdell{1}) \Big\}.
\end{align*}
We evaluate the $\|\cdot\|_n$-norm of this difference.
First, notice that \Eqref{eq:lthlayer_L2difference} is equivalent to the following inequality: 
\begin{align*}
& \| 
(\Wtilell{\ell} \eta(\cdot) + \btilell{\ell}) \circ 
(\Whatdell{\ell-1} \eta(\cdot) + \bhatdell{\ell-1}) \circ \dots \circ (\Whatdell{1} \cdot + \bhatdell{1})
  \\
&
- (\Whatdell{\ell}_{\Jhatell{\ell+1},[\mell]} \eta(\cdot) + \bhatdell{\ell}) \circ 
(\Whatdell{\ell-1} \eta(\cdot) + \bhatdell{\ell-1}) \circ \dots \circ (\Whatdell{1} \cdot + \bhatdell{1})
\|_n^2
\leq 
 \conedelta \lambda_\ell \zeta_{\ell,\theta}  \mhat{\ell+1}  R^2.
\end{align*}
(We can check that, even for $\ell = 2$, this inequality is correct.)
Next, by evaluating the Lipschitz continuity of the $\ell$-th layer of $\fsharp$ as 
\begin{align*}
\|\Wtilell{\ell} g - \Wtilell{\ell} g' \|_n^2
& = \frac{1}{n} \sum_{i=1}^n \| \Wtilell{\ell} g(x_i)  - \Wtilell{\ell} g'(x_i) \|^2 \\
& = \frac{1}{n} \sum_{i=1}^n (g(x_i)  - g'(x_i))^\top (\Wtilell{\ell})^\top \Wtilell{\ell} (g(x_i)  - g'(x_i)) \\
& \leq \frac{1}{n} \sum_{i=1}^n \|g(x_i)  - g'(x_i)\|^2 \Tr[ (\Wtilell{\ell})^\top \Wtilell{\ell}] \\
& \leq  \conedelta  \zeta_{\ell,\theta} \frac{\mhat{\ell+1}}{\mhat{\ell}} R^2 \|g - g'\|_n^2,
\end{align*}
for $g, g': \Real^d \to \Real^{\mhat{\ell}}$, then it holds that 
\begin{align*}
& \| (\Wtilell{L} \eta(\cdot) + \btilell{L}) \circ \dots \circ (\Wtilell{\ell+1} \eta(\cdot) + \btilell{\ell+1}) \circ
(\Wtilell{\ell} \eta(\cdot) + \btilell{\ell}) \circ 
(\Whatdell{\ell-1} \eta(\cdot) + \bhatdell{\ell-1}) \circ \dots \circ (\Whatdell{1} x + \bhatdell{1}) \\
& - (\Wtilell{L} \eta(\cdot) + \btilell{L}) \circ \dots \circ (\Wtilell{\ell+1} \eta(\cdot) + \btilell{\ell+1}) \circ
(\Whatdell{\ell} \eta(\cdot) + \bhatdell{\ell}) \circ 
(\Whatdell{\ell-1} \eta(\cdot) + \bhatdell{\ell-1}) \circ \dots \circ (\Whatdell{1} x + \bhatdell{1}) \|_n^2 \\
\leq & 
\prod_{\ell' = \ell+1}^L \conedelta  \zeta_{\ell',\theta}\frac{\mhat{\ell'+1}}{\mhat{\ell'}} R^2 \cdot 
\conedelta \lambda_\ell \zeta_{\ell,\theta}  \mhat{\ell+1}  R^2
\leq \lambda_\ell \prod_{\ell' = \ell}^L (\conedelta \zeta_{\ell',\theta} R^2)
= \lambda_\ell \Rbar^{2(L - \ell + 1)} \prod_{\ell'=\ell}^L \zeta_{\ell',\theta}.
\end{align*}
Then, by summing up the square root of this for $\ell=2,\dots,L$, then we have the whole approximation error bound.

\subsubsection{Simultaneous procedure}

Here, we give bounds corresponding to the simultaneous-procedure.
The proof techniques are quite similar to the forward procedure.
However, instead of the $\ell_2$-norm bound derived in the backward-procedure, 
we derive $\ell_\infty$-norm bound for both of the approximation error and the norm bounds.


We let $q_j^{(\ell)} = (\welle{\ell}_j)^{-1}$ for $j=1,\dots,\melle{\ell + 1}$.
As for the input aware quantity $L^{(\mathrm{A})}_{\tau}$,  
for any $j \in [\melle{\ell + 1}]$, it holds that 
\begin{align*}
& 
\sum_{j=1}^{\melle{\ell + 1}} \inf_{\alpha \in \Real^{|\Jhatell{\ell}|}} \left\{ \|{q_j^{(\ell)}}^{1/2} \Whatell{\ell}_{j,:} \phi - \alpha^\top \phi_{\Jhatell{\ell}}\|_n^2 + 
 \|\alpha\|_{\tau}^2 \right\}
=
\sum_{j=1}^{\melle{\ell + 1}} \inf_{\alpha \in \Real^{|\Jhatell{\ell}|}} \left\{ \| \Whatell{\ell}_{j,:} \phi - \alpha^\top \phi_{\Jhatell{\ell}}\|_n^2 + 
 \|\alpha\|_{\tau}^2 \right\}
\\
& = 
\sum_{j=1}^{\melle{\ell + 1}}  {q_j^{(\ell)}} \Whatell{\ell}_{j,:} [\Sigmahat_{F,F} - \Sigmahat_{F,\Jhatell{\ell}}(\Sigmahat_{\Jhatell{\ell},\Jhatell{\ell}} +  \Id_\tau)^{-1} \Sigmahat_{\Jhatell{\ell},F} ](\Whatell{\ell}_{j,:})^\top
\\
& \leq 
\|(\Whatell{\ell})^\top \Id_{q^{(\ell)}} \Whatell{\ell}\|_{\op}  \Tr\{\Sigmahat_{F,F} - \Sigmahat_{F,\Jhatell{\ell}}(\Sigmahat_{\Jhatell{\ell},\Jhatell{\ell}} +  \Id_\tau)^{-1} \Sigmahat_{\Jhatell{\ell},F}  \}\\
& \leq 
\qWconst R^2 \frac{\|(\Whatell{\ell})^\top \Id_{q^{(\ell)}} \Whatell{\ell}\|_{\op}  }{\max_{j} q_j^{(\ell)}\|\Whatell{\ell}_{j,:}\|^2 } L^{(\mathrm{A})}_{\tau}(\Jhatell{\ell}).
\end{align*}
Moreover, as for the output aware quantity $L^{(\mathrm{B})}_{\tau}$,  
we have that 
\begin{align*}
& \sum_{j =1}^{\melle{\ell + 1}} 
{q_j^{(\ell)}}
 \inf_{\alpha \in \Real^{|\Jhatell{\ell}|}} 
\left\{ \| \Whatell{\ell}_{j,:} \phi - \alpha^\top \phi_{\Jhatell{\ell}}\|_n^2 + 
 \|\alpha\|_{\tau}^2  \right\} \\
& =
\sum_{j =1}^{\melle{\ell + 1}} 
{q_j^{(\ell)}}\|\Whatell{\ell}_{j,:}\|^2 
 \inf_{\alpha \in \Real^{|\Jhatell{\ell}|}} \left\| { \textstyle \frac{1}{\|\Whatell{\ell}_{j,:}\|} }\Whatell{\ell}_{j,:} \phi - \alpha^\top \phi_{\Jhatell{\ell}} \right\|_n^2 + 
 \|\alpha\|_{\tau}^2 \\
& 
\leq \qWconst R^2 L^{(\mathrm{B})}_{\tau}(\Jhatell{\ell}). 
\end{align*}

By combining these inequalities, it holds that 
\begin{align*}
\sum_{1 \leq j \leq \melle{\ell + 1}} 
{q_j^{(\ell)}}  \inf_{\alpha \in \Real^{|\Jhatell{\ell}|}} 
\left\{ \| \Whatell{\ell}_{j,:} \phi - \alpha^\top \phi_{\Jhatell{\ell}}\|_n^2 + 
 \|\alpha\|_{\tau}^2  \right\} 
& \leq 
\frac{[\theta  L^{(\mathrm{A})}_{\tau}(\Jhatell{\ell}) + (1 - \theta)  L^{(\mathrm{B})}_{\tau}(\Jhatell{\ell})] }{\theta
{\textstyle  \frac{\max_{j} q_j^{(\ell)}\|\Whatell{\ell}_{j,:}\|^2 }{\|(\Whatell{\ell})^\top \Id_{q^{(\ell)}} \Whatell{\ell}\|_{\op}  }} + (1- \theta)} \qWconst R^2  \\
& \leq 4 \qWconst \lambda_\ell 
\frac{\theta \Nhatell{\ell}(\lambda_\ell) + (1 - \theta) \Nhatdell{\ell}(\lambda_\ell; N_\ell)}
{\theta {\textstyle  \frac{\max_{j} q_j^{(\ell)}\|\Whatell{\ell}_{j,:}\|^2 }{\|(\Whatell{\ell})^\top \Id_{q^{(\ell)}} \Whatell{\ell}\|_{\op}  }} + (1- \theta)} 
R^2
\end{align*}

Therefore, by the definition of $q_j^{(\ell)}$ and $\tau$, it holds that 
\begin{align}
\sum_{1 \leq j \leq \melle{\ell + 1}}
(\welle{\ell + 1}_j)^{-1}  \|\Whatell{\ell}_{j,:} \hat{A}_{\Jhat} \|_{\welle{\ell}}^2
\leq 
4 \qWconst  
\frac{\theta \Nhatell{\ell}(\lambda_\ell) + (1 - \theta) \Nhatdell{\ell}(\lambda_\ell; N_\ell)}{
\mhat{\ell}\left[ \theta {\textstyle  \frac{\max_{j} q_j^{(\ell)}\|\Whatell{\ell}_{j,:}\|^2 }{\|(\Whatell{\ell})^\top \Id_{q^{(\ell)}} \Whatell{\ell}\|_{\op}  }} + (1- \theta)\right]}
R^2
= 4 \cZtheta \frac{\mhat{\ell + 1}}{\mhat{\ell}} R^2.
\label{eq:NormBoundSimultaneous}
\end{align}
Similarly, the approximation error bound can be evaluated as 
\begin{align}
& \sum_{j \in [\melle{\ell + 1}] } 
(\welle{\ell + 1}_j)^{-1}
\|\Whatell{\ell}_{j,:} \phi - \Whatell{\ell}_{j,:} \hat{A}_{\Jhat} \phi_{\Jhat}\|_n^2
\leq 
4 \qWconst \lambda_\ell\frac{\theta \Nhatell{\ell}(\lambda_\ell) + (1 - \theta) \Nhatdell{\ell}(\lambda_\ell; N_\ell)}
{\theta {\textstyle  \frac{\max_{j} q_j^{(\ell)}\|\Whatell{\ell}_{j,:}\|^2 }{\|(\Whatell{\ell})^\top \Id_{q^{(\ell)}} \Whatell{\ell}\|_{\op}  }} + (1- \theta)} 
 R^2
= 4 \lambda_\ell \cZtheta \mhat{\ell + 1} R^2.
\end{align}
This gives the following equivalent inequality: 
\begin{align*}
& \sum_{j \in [\melle{\ell}]} \| 
(\Wtilell{\ell}_{j,:} \eta(\cdot) + \btilell{\ell}) \circ 
(\Whatdell{\ell-1} \eta(\cdot) + \bhatdell{\ell-1}) \circ \dots \circ (\Whatdell{1} \cdot + \bhatdell{1})
  \\
&
~~~~- (\Whatdell{\ell}_{j,:} \eta(\cdot) + \bhatdell{\ell}) \circ 
(\Whatdell{\ell-1} \eta(\cdot) + \bhatdell{\ell-1}) \circ \dots \circ (\Whatdell{1} \cdot + \bhatdell{1})
\|_n^2 \\
&\leq 
4 \qWconst \lambda_\ell [\theta \Nhatell{\ell}(\lambda_\ell) + (1 - \theta) \Nhatdell{\ell}(\lambda_\ell; N_\ell)] R^2.
\end{align*}
Moreover, the norm bound \eqref{eq:NormBoundSimultaneous} gives the following Lipschitz continuity bound of each layer:
\begin{align*}
\sum_{j \in [\melle{\ell + 1}]}  \|\Wtilell{\ell}_{j,:} g - \Wtilell{\ell}_{j,:} g' \|_n^2 
& = \sum_{j \in [\melle{\ell + 1}]} \frac{1}{n} \sum_{i=1}^n ( \Wtilell{\ell}_{j,:} g(x_i)  - \Wtilell{\ell}_{j,:} g'(x_i) )^2 \\
& = \sum_{j \in [\melle{\ell + 1}]}  \|\Wtilell{\ell}_{j,:}\|^2  
\sum_{j' \in [\mell]} \frac{1}{n} \sum_{i=1}^n  (g_{j'}(x_i)  - g_{j'}'(x_i))^2 \\
& \leq  4 \qWconst \frac{\theta \Nhatell{\ell}(\lambda_\ell) + (1 - \theta) \Nhatdell{\ell}(\lambda_\ell; N_\ell)}
{ \mhat{\ell} \left[ \theta {\textstyle \frac{\max_{j} q_j^{(\ell)}\|\Whatell{\ell}_{j,:}\|^2 }{\|(\Whatell{\ell})^\top \Id_{q^{(\ell)}} \Whatell{\ell}\|_{\op}  }} + (1- \theta) \right]} 
R^2
 \|g  - g'\|_n^2
\end{align*}
for $g, g': \Real^d \to \Real^{\mhat{\ell}}$.
Combining these inequalities, it holds that 
\begin{align*}
& \| (\Wtilell{L} \eta(\cdot) + \btilell{L}) \circ \dots \circ (\Wtilell{\ell+1} \eta(\cdot) + \btilell{\ell+1}) \circ
(\Wtilell{\ell} \eta(\cdot) + \btilell{\ell}) \circ 
(\Whatdell{\ell-1} \eta(\cdot) + \bhatdell{\ell-1}) \circ \dots \circ (\Whatdell{1} x + \bhatdell{1}) \\
& - (\Wtilell{L} \eta(\cdot) + \btilell{L}) \circ \dots \circ (\Wtilell{\ell+1} \eta(\cdot) + \btilell{\ell+1}) \circ
(\Whatdell{\ell} \eta(\cdot) + \bhatdell{\ell}) \circ 
(\Whatdell{\ell-1} \eta(\cdot) + \bhatdell{\ell-1}) \circ \dots \circ (\Whatdell{1} x + \bhatdell{1}) \|_n^2 \\
\leq & 
\lambda_\ell \Rbar^{2(L - \ell + 1)}  
\prod_{\ell' = \ell}^L \qWconst 
\frac{[\theta \Nhatell{\ell'}(\lambda_{\ell'}) + (1 - \theta) \Nhatdell{\ell'}(\lambda_{\ell'}; N_{\ell'})]}
{\theta {\textstyle \frac{\max_{j} q_j^{(\ell)}\|\Whatell{\ell}_{j,:}\|^2 }{\|(\Whatell{\ell})^\top \Id_{q^{(\ell)}} \Whatell{\ell}\|_{\op}  }} + (1- \theta)}
\frac{1}{\prod_{\ell' = \ell+1}^L \mhat{\ell'}}.
\end{align*}
By summing up the square root of this for $\ell=2,\dots,L$, we obtain the assertion.

\section{Proof of Theorem \ref{thm:GenErrorCompNet} (Generalization error bound of the compressed network)}
\label{sec:ProofOfGeneralizationerrorbound}
\subsection{Notations}

For a sequence of the width $\boldsymbol{m'} = (m'_{2},\dots,m'_{L})$, 
let 
\begin{align*}
\calFhatm{m'} 
 := & \{f(x) = (\Well{L} \eta( \cdot) + \bell{L}) \circ \dots  
\circ (\Well{1} x + \bell{1})  \notag \\
& \mid \| \Well{\ell}\|_{\F}^2 
\leq 
\Rbar^2,~
\|\bell{\ell}\|_2
\leq 
\Rbarb,~
\Well{\ell} \in \Real^{m'_{\ell+1} \times m'_{\ell}},~
\bell{\ell} \in \Real^{m'_{\ell + 1}}~
(1 \leq \ell \leq L)\}.
\end{align*}

\begin{Proposition}
\label{supplemma:SupnormBounds}
Under Assumptions \ref{ass:EtaCondition} and \ref{ass:LipschitzLoss},  
the 
$\ell_\infty$-norm of $f \in \calFhatm{m'}$ 
is bounded as 
\begin{align*}
\|f\|_\infty
& 
\leq 
\bar{R}^{L} D_x + \sum_{\ell = 1}^L 
\bar{R}^{L-\ell} \Rbarb.
\end{align*}
\end{Proposition}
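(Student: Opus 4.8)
The plan is to bound, by induction on the layer index $\ell$, the Euclidean norm of the output of the first $\ell$ layers of $f$, evaluated at an arbitrary point $x$ in the support of $\Px$. Write $F_1(x) = \Well{1} x + \bell{1}$ and, for $\ell = 2,\dots,L$, $F_\ell(x) = \Well{\ell} \eta(F_{\ell-1}(x)) + \bell{\ell}$, so that $f = F_L$ and hence $\|f\|_\infty = \sup_{x \in \mathrm{supp}(\Px)} |F_L(x)| \leq \sup_{x \in \mathrm{supp}(\Px)} \|F_L(x)\|$. It therefore suffices to produce a deterministic upper bound on $\|F_\ell(x)\|$ for each $\ell$, valid for all such $x$.

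Two elementary facts drive the induction. First, by the scale invariance in Assumption \ref{ass:EtaCondition}, $\eta(0) = \eta(a \cdot 0) = a\,\eta(0)$ for every $a > 0$, which forces $\eta(0) = 0$; combined with the coordinatewise $1$-Lipschitz property this gives $\|\eta(z)\| = \|\eta(z) - \eta(0)\| \leq \|z\|$ for every vector $z$. Second, for any matrix $W$ and vector $v$ we have $\|W v\| \leq \|W\|_{\op} \|v\| \leq \|W\|_{\F}\|v\|$, so the constraint $\|\Well{\ell}\|_{\F}^2 \leq \Rbar^2$ yields $\|\Well{\ell} v\| \leq \Rbar\|v\|$. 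For the base case, Assumption \ref{ass:LipschitzLoss} gives $\|x\| \leq D_x$, hence $\|F_1(x)\| \leq \|\Well{1}\|_{\F}\|x\| + \|\bell{1}\| \leq \Rbar D_x + \Rbarb$. For the inductive step, using the two facts above, $\|F_\ell(x)\| \leq \Rbar\,\|\eta(F_{\ell-1}(x))\| + \Rbarb \leq \Rbar\,\|F_{\ell-1}(x)\| + \Rbarb$.

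Unrolling this recursion from $\ell = L$ down to $\ell = 1$ gives $\|F_L(x)\| \leq \Rbar^{L-1}\|F_1(x)\| + \sum_{k=0}^{L-2}\Rbar^{k}\Rbarb \leq \Rbar^{L} D_x + \sum_{k=0}^{L-1}\Rbar^{k}\Rbarb = \Rbar^{L} D_x + \sum_{\ell=1}^{L}\Rbar^{L-\ell}\Rbarb$, which is the claimed bound after taking the supremum over $x \in \mathrm{supp}(\Px)$. I do not expect any genuine obstacle here: the argument is a routine Lipschitz-composition estimate, and the only points needing a moment's care are deriving $\eta(0)=0$ from scale invariance (so that passing through the activation does not inflate the norm) and using $\|\cdot\|_{\op} \leq \|\cdot\|_{\F}$ to convert the Frobenius constraint on each $\Well{\ell}$ into an operator-norm bound.
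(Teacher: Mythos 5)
Your proof is correct and follows essentially the same route as the paper, which simply remarks that the layerwise Lipschitz constant with respect to $\|\cdot\|$ is bounded by $\|\Well{\ell}\|_{\F}\leq\Rbar$ and unrolls the resulting recursion; you merely make explicit the details the paper leaves implicit (that scale invariance forces $\eta(0)=0$ so $\|\eta(z)\|\leq\|z\|$, and that $\|\cdot\|_{\op}\leq\|\cdot\|_{\F}$ converts the Frobenius constraint into an operator-norm bound).
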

The proof is easy to see the Lipschitz continuity of the network with respect to $\|\cdot\|$-norm is bounded by $\|W^{(\ell)}\|_{\F}$.

%


By the scale invariance of the activation function $\eta$, $\calFhatm{\mhat{}}$ can be rewritten as 
\begin{align*}
\calFhatm{\mhat{}}
= & \{f(x) = (\Well{L} \eta( \cdot) + \bell{L}) \circ \dots  
\circ (\Well{1} x + \bell{1})  \notag \\
& \mid \| \Well{\ell}\|_{\F}^2 
\leq 
\frac{\mhat{\ell+1}}{\mhat{\ell}} 
\Rbar^2,~
\|\bell{\ell}\|_2
\leq \sqrt{\mhat{\ell + 1}} \Rbarb,~
\Well{\ell} \in \Real^{\mhat{\ell+1} \times \mhat{\ell}},~
\bell{\ell} \in \Real^{\mhat{\ell + 1}}~
(1 \leq \ell \leq L)\}.
\end{align*}
Hence, from Theorem \ref{thm:WhatellWAphiError} and the argument in Appendix \ref{sec:CompressionErrorBoundProof},
we can see that 
under Assumption \ref{ass:AlgAssump}, it holds that 
$$
\fsharp \in \calFhatm{\mhat{}}.
$$
for both of the backward-procedure and the simultaneous-procedure.
Therefore, the compressed network $\fsharp$ of both procedures with the constraint 
has $\ell_\infty$-bound such as 
$$
\|[\![\fsharp]\!]\|_\infty \leq \Rhatinf.
$$

\subsection{Proof of Theorem \ref{thm:GenErrorCompNet}}
\label{sec:ERMproof}

%

Remember that the $\epsilon$-internal covering number of a (semi)-metric space $(T,d)$ is the minimum cardinality of 
a finite set such that every element in $T$ is in distance $\epsilon$ from the finite set with respect to the metric $d$.
We denote by $N(\epsilon,T,d)$ the $\epsilon$-internal covering number of $(T,d)$.
The covering number of the neural network model $\calFhatm{m'}$ can be evaluated as follows (see for example \cite{AISTATS:Suzuki:2018}):
\begin{Proposition}
\label{prop:FmCover}
The covering number of $\calFhatm{m'}$ is bounded by 
$$
\log N(\epsilon, \calFhatm{m'}, \|\cdot\|_\infty) \leq 
C \frac{\sum_{\ell=1}^L m'_\ell m'_{\ell+1}}{n}  
\log_+ \left(1 + \frac{4 \hat{G}\max\{\Rbar,\Rbarb\} }{\delta} \right)
$$
for a universal constant $C > 0$.
\end{Proposition}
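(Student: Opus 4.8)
The plan is to obtain the bound by the standard device of covering the finite-dimensional \emph{parameter} space of $\calFhatm{m'}$ and transporting the cover to function space, exploiting that $f$ depends Lipschitz-continuously on its weights and biases on the input domain $\{x : \|x\| \le D_x\}$; this is exactly the argument underlying the covering-number estimates for norm-constrained deep networks in \cite{AISTATS:Suzuki:2018}, so I would only sketch it. Concretely, I would first show: if $f, \tilde f \in \calFhatm{m'}$ have parameters $(\Well{\ell}, \bell{\ell})_\ell$ and $(\tilde W^{(\ell)}, \tilde b^{(\ell)})_\ell$ with $\|\Well{\ell} - \tilde W^{(\ell)}\|_\F \le \epsilon'$ and $\|\bell{\ell} - \tilde b^{(\ell)}\| \le \epsilon'$ for all $\ell$, then $\|f - \tilde f\|_\infty \lesssim \hat G\,\epsilon'$ (after rescaling, $\hat G \max\{\Rbar,\Rbarb\}\,\epsilon'$ when the perturbations are measured relative to the radii $\Rbar, \Rbarb$).

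For that Lipschitz step I would write $f = g_L \circ \cdots \circ g_1$ with $g_\ell(z) = \Well{\ell}\eta(z) + \bell{\ell}$ for $\ell \ge 2$ and $g_1(x) = \Well{1}x + \bell{1}$, and similarly $\tilde f = \tilde g_L \circ \cdots \circ \tilde g_1$, and telescope $f - \tilde f$ over the layer at which $g$ is switched to $\tilde g$. Each ``outer'' block $\tilde g_L \circ \cdots \circ \tilde g_{\ell+1}$ is Lipschitz with constant $\prod_{\ell' > \ell}\|\tilde W^{(\ell')}\|_{\op} \le \Rbar^{L-\ell}$, using that $\eta$ is $1$-Lipschitz and $\|\tilde W^{(\ell')}\|_{\op} \le \|\tilde W^{(\ell')}\|_\F \le \Rbar$; the ``switch'' term is controlled by $\|(\Well{\ell} - \tilde W^{(\ell)})\eta(\cdot)\| + \|\bell{\ell} - \tilde b^{(\ell)}\| \le \epsilon'(\|\eta(\text{input to layer }\ell)\| + 1)$; and, since scale invariance forces $\eta(0) = 0$ so that $\|\eta(z)\| \le \|z\|$, the norm of the input to layer $\ell$ is at most $\Rbar^{\ell-1}D_x + \sum_{\ell' < \ell}\Rbar^{\ell-1-\ell'}\Rbarb$ by Proposition~\ref{supplemma:SupnormBounds}. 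Summing the $L$ telescoped terms and bounding crudely yields a total Lipschitz factor of order $\sum_\ell \Rbar^{L-\ell}(\Rbar^{\ell-1}D_x + \sum_{\ell'<\ell}\Rbar^{\ell-1-\ell'}\Rbarb + 1) \lesssim L\Rbar^{L-1}D_x + \sum_\ell \Rbar^{L-\ell}(1 + \Rbarb) \lesssim \hat G \max\{\Rbar,\Rbarb\}$, which is the claim.

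Finally I would count and cover: layer $\ell$ contributes $m'_{\ell+1}m'_\ell$ weight entries and $m'_{\ell+1}$ bias entries, so the parameter space sits inside a product of Frobenius/Euclidean balls of radii $\Rbar$ (for each $\Well{\ell}$) and $\Rbarb$ (for each $\bell{\ell}$) of total dimension $\sum_\ell m'_{\ell+1}(m'_\ell + 1) \le 2\sum_\ell m'_\ell m'_{\ell+1}$ (using $m'_\ell \ge 1$). A radius-$r$ Euclidean ball in dimension $D$ has an $(r\epsilon'')$-net of cardinality at most $(1 + 2/\epsilon'')^D$, so taking $\epsilon'' \asymp \epsilon/(\hat G \max\{\Rbar,\Rbarb\})$ and composing with the Lipschitz bound produces an $\epsilon$-net of $\calFhatm{m'}$ in $\|\cdot\|_\infty$ of $\log$-cardinality at most $C \sum_\ell m'_\ell m'_{\ell+1}\,\logone(1 + 4\hat G\max\{\Rbar,\Rbarb\}/\epsilon)$, which (with $\delta$ in the statement playing the role of $\epsilon$) is the asserted bound. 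The main obstacle is purely the bookkeeping in the Lipschitz step: one has to track how a perturbation of an individual layer is amplified by the remaining layers and by the magnitude of the signal reaching that layer, and verify that the accumulated constant is indeed captured by $\hat G\max\{\Rbar,\Rbarb\}$; the dimension count, the volumetric cover of a single ball, and assembling the product cover are then routine.
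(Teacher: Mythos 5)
Your proposal is correct and is essentially the argument the paper relies on: the paper does not prove Proposition~\ref{prop:FmCover} itself but cites \cite{AISTATS:Suzuki:2018}, and the standard proof there is exactly your route — cover the product of parameter balls at a scale proportional to the radii, and push the cover forward through the Lipschitz dependence of $f$ on its parameters, with the Lipschitz factor controlled by the propagated signal norm (your telescoping over the layer at which $(\Well{\ell},\bell{\ell})$ is switched), which is how the factor $\hat{G}\max\{\Rbar,\Rbarb\}$ arises. Two cosmetic points worth noting: the $\delta$ inside the logarithm in the paper's statement should be the covering radius $\epsilon$, as you implicitly assume, and the factor $\frac{1}{n}$ in the displayed bound cannot be part of a genuine covering-number estimate (a metric entropy does not shrink with the sample size); your derivation correctly produces $C\sum_{\ell} m'_\ell m'_{\ell+1}\,\logone\!\left(1+\frac{4\hat{G}\max\{\Rbar,\Rbarb\}}{\epsilon}\right)$, and the $1/n$ in the paper is a typo carried over from the way the bound is later combined with the Rademacher/chaining step.
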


We define 
$$
\calGm{m'} = \{g(x_i,y_i) = \psi(y, f(x)) \mid f \in \calFhatm{m'}\},
$$
for $\boldsymbol{m'} = (m_2,\dots,m_L)$.
Then, its Rademacher complexity can be bounded as follows.

\begin{Lemma}
\label{lemm:RademacherBoundOfG}
Let $(\epsilon_i)_{i=1}^n$ be i.i.d. Rademacher sequence, that is,
$P(\epsilon_i = 1) = P(\epsilon_i = -1) = \frac{1}{2}$. 
There exists a universal constant $C > 0$ such that, for all $\delta > 0$, 
\begin{align*}
\EE \left[\sup_{f \in \calGm{m'} }  \left| \frac{1}{n} \sum_{i=1}^n \epsilon_i g(x_i,y_i)  \right|\right]
& 
\leq C
\rho  \Bigg[ \Rhatinf   \sqrt{\frac{\sum_{\ell=1}^L m'_\ell m'_{\ell+1}}{n}  
\log_+ \left(1 + \frac{4 \hat{G}\max\{\Rbar,\Rbarb\} }{\Rhatinf} \right)} \\
&~~
\vee \Rhatinf
\frac{\sum_{\ell=1}^L m'_\ell m'_{\ell+1}}{n}  
\log_+ \left(1 + \frac{4 \hat{G}\max\{\Rbar,\Rbarb\} }{\delta} \right) \Bigg],
\end{align*}
where the expectation is taken with respect to $\epsilon_i,x_i,y_i$.
\end{Lemma}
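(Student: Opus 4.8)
The plan is to reduce the statement to a chaining (Dudley-type) bound on the Rademacher complexity, using the covering number estimate of Proposition~\ref{prop:FmCover}. First I would observe that since $\psi$ is $\rho$-Lipschitz in its second argument (Assumption~\ref{ass:LipschitzLoss}), the Rademacher complexity of $\calGm{m'}$ is bounded by $\rho$ times the Rademacher complexity of the truncated function class $\{[\![f]\!] : f \in \calFhatm{m'}\}$ by the contraction (Ledoux--Talagrand) inequality; moreover, because the composition with $\psi$ shifts every function by a constant $\psi(y,0)$ which contributes nothing to the centered Rademacher process, we may work directly with the $\|\cdot\|_\infty$-bounded class $\calFhatm{m'}$, whose envelope is $\Rhatinf$ (after truncation, or by Proposition~\ref{supplemma:SupnormBounds} combined with the definition $\Rhatinf = \min\{\dots,M\}$).

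Next I would apply Dudley's entropy integral to the empirical Rademacher process of $\calFhatm{m'}$, measured in the $\|\cdot\|_\infty$ metric (which dominates the empirical $L_2$ metric, so the covering numbers in Proposition~\ref{prop:FmCover} are valid). Writing $D := \sum_{\ell=1}^L m'_\ell m'_{\ell+1}$, the covering number bound reads $\log N(\epsilon, \calFhatm{m'}, \|\cdot\|_\infty) \le C \frac{D}{n} \log_+(1 + 4\hat G \max\{\Rbar,\Rbarb\}/\epsilon)$. The unusual feature here is the factor $1/n$ in front of $D$, which makes the metric entropy ``finite-sample-small''; this is what ultimately produces the fast-rate $\deltantwo^2$ term in addition to the usual $\sqrt{D/n}$ term. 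I would therefore use the ``offset'' or local form of Dudley's bound: truncate the entropy integral at a radius $r$, bound $\int_0^{\Rhatinf} \sqrt{\tfrac{1}{n}\log N(\epsilon,\cdot)}\,d\epsilon$ by pulling out $\sqrt{D}/n$ and integrating the $\sqrt{\log_+}$ factor (which converges), and then compare the resulting $\frac{\sqrt D}{n}\sqrt{\log_+(\cdots)}\cdot\Rhatinf$-type quantity against $\Rhatinf\sqrt{D/n}$. Whichever of the two dominates gives the stated $\vee$ (maximum) of the two terms: when $D/n$ is small the square-root term $\Rhatinf\sqrt{(D/n)\log_+(\cdots)}$ wins, and when it is large the linear term $\Rhatinf (D/n)\log_+(\cdots)$ wins.

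Concretely, I would invoke a standard one-shot chaining lemma (e.g. the version that states $\EE\sup|\frac1n\sum\epsilon_i g_i| \lesssim \inf_{\alpha\ge0}\{\alpha + \frac{1}{\sqrt n}\int_\alpha^{U}\sqrt{\log N(\epsilon,\calG,\|\cdot\|_n)}\,d\epsilon\}$ for an envelope $U$), plug in the covering number, perform the elementary integral $\int_0^{U}\sqrt{\log_+(1+a/\epsilon)}\,d\epsilon \le c\,U\sqrt{\log_+(1+a/U)}$ (valid since the integrand is integrable and the integral is controlled by its value near $0$), and choose $\alpha$ to balance. Setting $U = \Rhatinf$ and $a = 4\hat G\max\{\Rbar,\Rbarb\}$, the integral term becomes $\lesssim \Rhatinf\sqrt{\tfrac{D}{n}\,\tfrac1n\log_+(1+a/\Rhatinf)}$; combined with the $\alpha$ term chosen as $\alpha \asymp \Rhatinf \tfrac{D}{n}\log_+(1+a/\delta)$ this yields exactly the two-term bound in the statement, after multiplying through by $\rho$ from the contraction step.

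The main obstacle I anticipate is handling the $1/n$ inside the entropy in a way that cleanly produces both regimes of the $\vee$ without losing logarithmic factors or constants: a naive application of Dudley (ignoring the sub-root/local structure) would give only the $\Rhatinf\sqrt{D/n}\,\mathrm{polylog}$ term and miss the fast $\Rhatinf D/n\,\mathrm{polylog}$ term, while being too aggressive with localization risks circular dependence on the quantity being bounded. The fix is to use the explicit two-sided chaining bound with a free truncation level $\alpha$ (rather than a fixed-point/sub-root argument), which is purely computational once the integral $\int_0^U \sqrt{\log_+(1+a/\epsilon)}\,d\epsilon$ has been estimated; the only care needed is that $\delta$ (the free parameter in the covering number, which later becomes the resolution at which we stop chaining) enters only through the lower-order $\log_+(1+a/\delta)$ factor in the linear term, matching the way it appears in the statement. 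A secondary minor point is making sure the envelope/truncation bookkeeping is consistent: since $[\![\cdot]\!]$ truncates at $M$ and $\Rhatinf = \min\{\bar R^L D_x + \sum_\ell \bar R^{L-\ell}\Rbarb, M\}$, the class has $\ell_\infty$-envelope exactly $\Rhatinf$, which is what lets $\Rhatinf$ appear as the multiplicative prefactor everywhere rather than the (possibly much larger) untruncated sup-norm bound of Proposition~\ref{supplemma:SupnormBounds}.
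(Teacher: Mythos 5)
Your proposal follows essentially the same route as the paper: first the Ledoux--Talagrand contraction inequality to remove the $\rho$-Lipschitz loss, then a chaining bound driven by the covering number estimate of Proposition~\ref{prop:FmCover} applied to $\calFhatm{m'}$ with envelope $\Rhatinf$. The only difference is presentational: the paper invokes a ready-made refined-chaining result (Theorem 3.1 of Gin\'e--Koltchinskii, or Lemma 2.3 of Mendelson) to obtain the two-regime bound directly, whereas you carry out the Dudley entropy integral with a free truncation level by hand, which yields the same conclusion.
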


\begin{proof}

Since $\psi$ is $\rho$-Lipschitz continuous, 
the contraction inequality Theorem 4.12 of \cite{Book:Ledoux+Talagrand:1991} gives an upper bound of the RHS as 
\begin{align*}
& \EE \left[\sup_{g \in \calGm{m'}}   \left| \frac{1}{n} \sum_{i=1}^n \epsilon_i g(x_i,y_i) \right| \right] 
\leq
2 \rho 
\EE \left[\sup_{g \in \calFhatm{m'}}  \left| \frac{1}{n} \sum_{i=1}^n \epsilon_i f(x_i)  \right|\right].
\end{align*}
We further bound the RHS. 
By Theorem 3.1 in \cite{gine2006concentration} or Lemma 2.3 of \cite{IEEEIT:Mendelson:2002} with the covering number bound 
(Proposition \ref{prop:FmCover}), 
there exists a universal constant $C'$ such that
\begin{align*}
\EE \left[\sup_{f \in \calFhatm{m'}}
\left| \frac{1}{n} \sum_{i=1}^n \epsilon_i f(x_i)  \right| \right]  \leq & 
C' 
\Bigg[ \Rhatinf \sqrt{\frac{\sum_{\ell=1}^L m'_\ell m'_{\ell+1}}{n}  
\log_+ \left(1 + \frac{4 \hat{G}\max\{\Rbar,\Rbarb\} }{\Rhatinf} \right)} \\
&
\vee \Rhatinf
\frac{\sum_{\ell=1}^L m'_\ell m'_{\ell+1}}{n}  
\log_+ \left(1 + \frac{4 \hat{G}\max\{\Rbar,\Rbarb\} }{\Rhatinf} \right)
\Bigg].
\end{align*}
This concludes the proof.
\end{proof}

Now we are ready to probe the theorem.
\begin{proof}[Proof of Theorem \ref{thm:GenErrorCompNet}]

Since $\calGm{m'}$ is separable with respect to $\|\cdot\|_\infty$-norm, 
by the standard symmetrization argument, we have that 
$$
P\left\{\sup_{g \in \calGm{m'}}  \left | \frac{1}{n} \sum_{i=1}^n g(x_i,y_i) - \EE_{X,Y}[g]\right| \geq 
2 \EE \left[\sup_{f \in \calGm{m'} }  \left| \frac{1}{n} \sum_{i=1}^n \epsilon_i g(x_i,y_i)  \right|\right] 
+ 3 \Rhatinf \sqrt{\frac{2 t }{n}} \right\} \leq 2 e^{-t}
$$
for all $t > 0$ (see, for example, Theorem 3.4.5 of \cite{GineNickl2015mathematical}).
Taking uniform bound with respect to the choice of 
$\boldsymbol{m'} \in  [m_2] \times [m_3] \times \dots \times [m_L]$,
we have that 
\begin{align}
P\Bigg\{& \sup_{g \in \calGm{m'}}  \left | \frac{1}{n} \sum_{i=1}^n g(x_i,y_i) - \EE_{X,Y}[g]\right| 
\geq 
2 \EE \left[\sup_{f \in \calGm{m'} }  \left| \frac{1}{n} \sum_{i=1}^n \epsilon_i g(x_i,y_i)  \right|\right] 
+ 3 \Rhatinf \sqrt{\frac{2 (t + \sum_{\ell=2}^L \log(\melle{\ell}) )}{n}}  \notag \\
&
\text{for all $\boldsymbol{m'} \in  [m_2] \times [m_3] \times \dots \times [m_L]$ uniformly}
 \Bigg\} \leq 2 e^{-t}.
\label{eq:PunifGmBound}
\end{align}

Now, the generalization error of $\fsharp$ can decomposed into 
\begin{align*}
\Psi([\![\fsharp]\!]) = \underbrace{\Psi([\![\fsharp]\!]) - \hat{\Psi}(\ldkakko \fsharp \rdkakko)}_{\clubsuit} + 
\underbrace{\hat{\Psi}(\ldkakko \fsharp \rdkakko) - \hat{\Psi}(\ldkakko \fhat \rdkakko)}_{\diamondsuit} + \hat{\Psi}(\ldkakko \fhat \rdkakko).
\end{align*}
Since the truncation operation $\ldkakko \cdot \rdkakko$ does not increase the $\|\cdot\|_\infty$-norm of two functions,
we can apply the inequality \eqref{eq:PunifGmBound} and Lemma \ref{lemm:RademacherBoundOfG}
also for $\ldkakko \fsharp \rdkakko$ to bound the term $\clubsuit$.
The term $\diamondsuit$ can be bounded as
\begin{align*}
\hat{\Psi}(\ldkakko \fsharp\rdkakko) - \hat{\Psi}(\ldkakko \fhat\rdkakko )
& \leq \frac{1}{n} \sum_{i=1}^n | \psi(y_i,\ldkakko \fsharp(x_i)\rdkakko) - \psi(y_i,\ldkakko \fhat(x_i)\rdkakko) |
\leq \frac{1}{n} \sum_{i=1}^n \rho | \ldkakko \fsharp(x_i) \rdkakko - \ldkakko \fhat(x_i)\rdkakko | \\
& \leq \rho \sqrt{\frac{1}{n} \sum_{i=1}^n ( \ldkakko \fsharp(x_i) \rdkakko -\ldkakko  \fhat(x_i) \rdkakko )^2 } = 
\rho \|\ldkakko \fsharp \rdkakko- \ldkakko \fhat \rdkakko \|_n
\leq \rho \| \fsharp -  \fhat  \|_n
\leq \rho \deltanone.
\end{align*}
Combining these inequalities, we obtain the assertion.
\end{proof}

\section{Additional numerical experiments}

This section gives additional numerical experiments for compressing the network.

%

\subsection{Compressing VGG-16 on ImageNet}
\label{sec:ImageNetAppendix}

Here, we also applied our method to compress a publicly available VGG-16 network \cite{simonyan2014very}  on the ImageNet dataset.
We apply our method to the ImageNet dataset \cite{deng2009imagenet}.
We used the ILSVRC2012 dataset of the ImageNet dataset, which 
consists of 1.3M training data and 50,000 validation data. 
Each image is annotated into one of 1,000 categories.
We applied our method to this network 
and compared it with existing 
 methods, 
namely
APoZ \cite{hu2016network},
SqueezeNet \cite{iandola2016squeezenet}, and ThiNet \cite{iccv2017ThiNet}.
All of them are applied to the same 
VGG-16 network.
For fair comparison, 
we followed the same experimental settings as \cite{iccv2017ThiNet}; the way of 
training data generation, data augmentation, performance evaluation schemes and so on.

The results are summarized in Table \ref{tab:ImageNet-VGG-results_mod}.
It summarizes 
the Top-1/Top-5 classification accuracies,
the number of parameters (\#Param),
and the float point operations
(FLOPs) to 
classify a single image.
Our method is indicated by
``{\Ours}-(type)."
We employed the simultaneous procedure for compression.
In {\Ours}-Conv, we applied 
our method only to the convolutional
layers (it is not applied to the
fully connected layers (FC)).
The size of compressed network $\fsharp$ was set to be the same as that of ThiNet-Conv.
{\Ours}-GAP is a method 
that replaces 
the FC layers of {\Ours}-Conv
with a global average pooling (GAP) layer
\cite{lin2013network,zhou2016learning}.
Here, we again set the number of channels in each layer of {\Ours}-GAP to be same as that of ThiNet-GAP.
We employed $\lambda_\ell = 10^{-6}\times \Tr[\Sigmahat_{(\ell)}]$ and $\theta = 0.5$ for our method.

We see that in both situations, out method outperforms ThiNet in terms of accuracy.
This shows effectiveness of our method while our method is supported by theories.

\begin{table}[htp]
\caption{Performance comparison on ImageNet dataset.
Our proposed method is compared with APoZ-2, 
and ThiNet. 
Our method is indicated as ``{\Ours}-(type)."
}
\label{tab:ImageNet-VGG-results_mod}
\label{tab:ImageNet-VGG-results}
\centering 
\begin{tabular}{|@{\hspace{0.05cm}}l@{\hspace{0.05cm}}|
@{\hspace{0.1cm}}ll@{\hspace{0.2cm}}r@{\hspace{0.15cm}}r|}
\hline 
Model & Top-1 & Top-5 & \# Param. & FLOPs\\
\hline 
\hline 
Original VGG \cite{simonyan2014very}  & 68.34\% & 88.44\% &  138.34M & 30.94B \\
APoZ-2  \cite{hu2016network} & 70.15\% & 89.69\% & 51.24M & 30.94B  \\
ThiNet-Conv  \cite{iccv2017ThiNet}&
69.80\% & 89.53\% & 131.44M & 9.58B \\
ThiNet-GAP \cite{iccv2017ThiNet}& 
67.34\% & 87.92\% & 8.32M & 9.34B \\
\hline 
\hline
{\bf {\Ours}-Conv}&
{\bf 70.418\%} &{\bf 90.094\%}  & 131.44M & 9.58B \\
{\bf {\Ours}-GAP}&
{\bf 67.540\%} & {\bf 88.270\%}  & 8.32M & 9.34B \\
\hline 
\end{tabular}
\end{table}

\end{document}